\let\NAT@parse\undefined
\newcommand{\citep}[1]{\cite{#1}}
\newtheorem{theorem}{Theorem}
\newtheorem{definition}{Definition}
\newtheorem{example}{Example}
\newtheorem{corollary}{Corollary}
\newtheorem{proposition}{Proposition}
\newtheorem{remark}{Remark}
\title{\LARGE \bf  A Test-Function Approach to Incremental Stability
}
\author{ \parbox{1.5 in}{\centering Daniel Pfrommer \\
        MIT\\
        {\tt\small dpfrom@mit.edu}}
        \hspace*{ 0.3 in}
        \parbox{1.5 in}{ \centering Max Simchowitz \\
        CMU \\
        {\tt\small msimchow@andrew.cmu.edu}}
        \hspace*{ 0.3 in}
        \parbox{1.5 in}{ \centering Ali Jadbabaie \\
        MIT \\
        {\tt\small jadbabai@mit.edu}}
    \thanks{
    DP and AJ acknowledge support from the Office of Naval Research under ONR grant N00014-23-1-
    2299 and the DARPA AI Quantified program. DP additionally acknowledges support from a MathWorks Research Fellowship.
    }%
}
\begin{document}

\maketitle
\thispagestyle{empty}
\pagestyle{empty}

\begin{abstract}

This paper presents a novel framework for analyzing Incremental-Input-to-State Stability ($\delta$ISS) based on the idea of using rewards as ``test functions.'' Whereas control theory traditionally deals with Lyapunov functions that satisfy a time-decrease condition, reinforcement learning (RL) value functions are constructed by exponentially decaying a Lipschitz reward function that may be non-smooth and unbounded on both sides. Thus, these RL-style value functions cannot be directly understood as Lyapunov certificates. We develop a new equivalence between a variant of incremental input-to-state stability of a closed-loop system under given a policy, and the regularity of RL-style value functions under adversarial selection of a H\"older-continuous reward function. This result highlights that the regularity of value functions, and their connection to incremental stability, can be understood in a way that is distinct from the traditional Lyapunov-based approach to certifying stability in control theory.

\end{abstract}

\newcommand{\R}{\mathbb{R}}
\newcommand{\Z}{\mathbb{Z}}

\newcommand{\Rc}{\mathcal{R}}

\newcommand{\classK}{\mathcal{K}}
\newcommand{\classKL}{\mathcal{KL}}
\newcommand{\Xspace}{\mathcal{X}}
\newcommand{\Uspace}{\mathcal{U}}
\newcommand{\calP}{\mathcal{P}}
\newcommand{\Pnot}{\mathcal{P}_0}
\newcommand{\Pt}{\mathcal{P}_t}
\newcommand{\Pk}{\mathcal{P}_k}
\newcommand{\Exp}{\mathbb{E}}

\newcommand{\bx}{\mathbf{x}}
\newcommand{\by}{\mathbf{y}}
\newcommand{\bu}{\mathbf{u}}
\newcommand{\bv}{\mathbf{v}}
\newcommand{\bw}{\mathbf{w}}
\newcommand{\bfeta}{\bm{\eta}}

\newcommand{\be}{\mathbf{e}}

\newcommand{\sched}{\bm{\lambda}}
\newcommand{\schedbar}{\bar{\bm{\lambda}}}
\newcommand{\schednorm}{\|\bar{\sched}\|_1}
\newcommand{\decay}{\lambda}
\newcommand{\decaybar}{\bar{\lambda}}
\newcommand{\Qpi}{Q^{\pi,r}_{\sched}}
\newcommand{\Vpi}{V^{\pi,r}_{\sched}}
\newcommand{\Qpitv}{Q^{\pi,r_t}_{\sched}}
\newcommand{\Vpitv}{V^{\pi,r_t}_{\sched}}
\newcommand{\Vpip}{V^{\pi',r}_{\sched}}
\newcommand{\Vpih}{V^{\hat{\pi},r}_{\sched}}
\newcommand{\cX}{\mathcal{X}}
\newcommand{\cU}{\mathcal{U}}

\section{Introduction}

\label{sec:intro}
The fields of reinforcement learning (RL) and control theory share a common origin in the study optimal control, yet these two communities have diverged in their emphasis. Because solving the Hamilton-Jacobi-Bellman (HJB) equation, which characterizes the optimal control solution, is computationally intractable in general, control theory has emphasized stability, performance, and robustness of system dynamics to perturbation. Via Lyapunov characterizations, these conditions are often tractable to verify. In contrast, RL has retained its focus on minimizing cost or maximizing return, opting to surmount the intractable HJB equation with iterative learning and neural function approximation. 

With these different emphases come different natural objects of study. In control, Lyapunov stability certificates  have remained a popular technique to quantify and enforce stability. Accordingly, control costs are selected to have very specific properties (see, e.g. \Cref{prop:op_control}) to ensure that their induced cost-to-go, also called \emph{value functions} \cite{khalil2002nonlinear}, meet the Lyapunov criterion. In RL, however, the focus is purely on cost minimization or, by negation, reward maximization. 
Because stability is no longer the ultimate desideratum, RL costs or rewards are chosen only by the target behaviors that their minimization or maximization encourages. Consequently, the value functions associated with such costs/rewards need not be, are often are not, Lyapunov functions, and thus do not (on their own) certify stability (e.g. \cite{kaiser2019model}). Despite these limitations of RL value functions, in this work we ask:
\begin{quote}
\emph{To what extent can control-theoretic stability be derived from the properties of the sorts of value-functions encountered in reinforcement learning? }
\end{quote}

Henceforth, we formulate RL with \emph{rewards} to disambiguate the semantics of control costs. Moreover, we focus on incremental input-to-state stability, $\delta$ISS (\cite{angeli2009further}, and \Cref{def:nominal_iiss} below), as our preferred control theoretic stability criterion. The inherent robustness of $\delta$ISS has been key to guarantees in domains such as Model-Predictive-Control \cite{bayer2013discrete} and imitation learning \cite{block2023provable,pfrommer2022tasil,swamy2021moments}.

To connect $\delta$ISS to RL value functions, we adopt the perspective from inverse reinforcement and imitation learning \cite{abbeel2004apprenticeship} where reward functions serve as test functions which discriminate the performance under a learned policy from an idealized or expert policy. We show that, given a class $\mathcal R$ of sufficiently regular reward functions with sufficient discriminative power (\Cref{def:reward}), a variant of the $\delta$ISS condition is essentially \emph{equivalent} to uniform H\"older continuity of the $Q$-functions \citep{bertsekas2019reinforcement} associated with the rewards $r \in \mathcal R$.

We provide examples of classes $\mathcal R$, reflective of popular choices of rewards in the RL community,  which satisfy the conditions of our results, but whose associated $Q$/value functions do not provide Lyapunov functions. For example, reward signals of the form $r(x,u) = v^\top x$ can only certifying (something akin to) stability along the $v$ direction. Nevertheless, the class of rewards $\{x \mapsto v^\top x: v:\|v\| = 1\}$ is sufficiently discriminative that our results hold.




In addition to providing new criterion for certifying the ($\delta$ISS) stability of control systems, we hope that our findings help to bridge the gaps which have emerged between the RL and controls communities in  recent decades.   Ultimately, we hope these connections may spark algorithmic and conceptual advancements in both disciplines. 



\section{Control Preliminaries}

\textbf{Policies and Dynamics.} We consider a full-information dynamical system $f: \cX \times \cU \to \cX$ with state space $\cX$ and input state $\cU$, together with deterministic, static feedback laws, or \emph{policies}, $\pi: \cX \to \cU$. 
The restriction to deterministic dynamics and policies is 
for simplicity and compatibility with standard control-theoretic notions of stability, but we believe that our results can be extended to stochastic policies with appropriate modifications to relevant definitions.

\textbf{Stability of  Equilibrium Points.}
Control theory has been broadly concerned with the \emph{stability} of dynamical systems, referring to their degree of sensitivity or robustness to perturbation. The study of stability dates back to Lyapunov's famous treatise \cite{lyapunov1992general} and the eponymous Lyapunov function. The concepts were later extended by Zames~\cite{Zames63,Zames81} and Sontag to nonlinear systems with control inputs \cite{sontag1983lyapunov} and has spawned a variety of stability criterion \cite{grune2002input}. For a in-depth treatment, see \cite{sontag2013mathematical, khalil2002nonlinear}.

We begin our own discussion with the classical notion of global asymptotic stability to a \emph{single point}, before turning to incremental-to-input-to-state stability we consider throughout the remainder of this paper. In what follows, we recall from \cite{angeli2009further,khalil2002nonlinear} the classes of univariate and bivariate gain functions, $\classK_\infty \subset \classK$ and $\classKL$.
\footnote{\label{classK}Following convention, $\classK$  denotes monotonically increasing functions $\gamma: [0,a) \to [0,\infty)$ for $a \geq 0$ where $\gamma(0) = 0$ and $\classKL$ to denote functions $\beta: [0,\infty) \times [0,\infty) \to [0,\infty)$ such that $t \to \beta(s, t)$ is monotonically decreasing for all $s$ and $s \to \beta(s,t) \in \classK_{\infty}$ for all $t$. The subset $\classK_{\infty} \subset \classK$ denotes the set of \textbf{coercive} class $\classK$ functions where $a = \infty$: those s.t. $\lim_{x\to\infty} \gamma(x) = \infty$}

\begin{definition}[Global Asymptotic Stability \cite{khalil2002nonlinear}]A system $\bx_{t+1} = f(\bx_t)$ is globally-asymptotically stable (GAS) to $\overline{\bx}$ iff there exists some $\beta \in \classKL$ such that $\|\bx_t - \overline{\bx}\| \leq \beta(\|\bx_0 - \overline{\bx}\|,t)$.
\end{definition}

\begin{proposition}[GAS Lyapunov Function]A system $f$ is globally-asymptotically-stable to $\overline{\bx}$ iff there exists an GAS Lyapunov function $V$, that is, a function $V: \Xspace \to \R$ such that, $\alpha_1(\|\bx - \overline{\bx}\|) \leq V(\bx) \leq \alpha_2(\|\bx - \overline{\bx}\|)$
for some $\alpha_1,\alpha_2 \in \classK_\infty$ and $V(f(\bx)) - V(\bx) < 0$ for all $\bx$.
\end{proposition}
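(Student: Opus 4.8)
The plan is to prove the two implications separately, after reducing to $\overline{\bx} = 0$ by translating coordinates (replacing $f(\bx)$ with $f(\bx + \overline{\bx}) - \overline{\bx}$), and to assume throughout that $f$ and the candidate $V$ are continuous, as is standard for this equivalence.

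\textbf{Sufficiency} ($\Leftarrow$). Suppose a Lyapunov function $V$ exists. Along any trajectory, the decrease condition makes $\{V(\bx_t)\}$ strictly decreasing, while the sandwich bound $\alpha_1(\|\bx_t\|) \le V(\bx_t)$ keeps it bounded below by $0$; hence it converges to some $L \ge 0$. To rule out $L > 0$, I would note that $L > 0$ forces $\alpha_2^{-1}(L) \le \|\bx_t\| \le \alpha_1^{-1}(V(\bx_0))$, so the trajectory stays in a compact annulus on which the continuous map $\psi(\bx) := V(\bx) - V(f(\bx))$ attains a strictly positive minimum $\delta$, giving $V(\bx_t) \le V(\bx_0) - t\delta \to -\infty$ and contradicting $V \ge 0$; thus $L = 0$ and $\|\bx_t\| \to 0$. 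To upgrade this pointwise convergence to a uniform $\classKL$ estimate, I would extract from $\psi$ a class-$\classK$ lower bound $\alpha_3$ on each sublevel ball (using continuity and compactness), obtaining the one-step comparison $V(\bx_{t+1}) \le V(\bx_t) - \alpha_3(\alpha_2^{-1}(V(\bx_t)))$, and then invoke the discrete-time comparison lemma to produce some $\beta \in \classKL$ with $V(\bx_t) \le \beta(V(\bx_0), t)$. Composing with $\alpha_1^{-1}$ and $\alpha_2$ then yields $\|\bx_t\| \le \alpha_1^{-1}(\beta(\alpha_2(\|\bx_0\|), t))$, which is the desired $\classKL$ bound.

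\textbf{Necessity} ($\Rightarrow$). Suppose GAS holds with $\|\bx_t\| \le \beta(\|\bx_0\|, t)$ for $\beta \in \classKL$. The natural candidate is the Massera-type accumulated sum $V(\bx_0) = \sum_{t=0}^\infty g(\|\bx_t\|)$ for a well-chosen $g \in \classK_\infty$, where $\bx_t$ is the trajectory from $\bx_0$. The decrease is then automatic and telescopes: since the trajectory from $\bx_1 = f(\bx_0)$ is $(\bx_{t+1})_{t\ge 0}$, we get $V(f(\bx_0)) = V(\bx_1) = \sum_{t \ge 1} g(\|\bx_t\|) = V(\bx_0) - g(\|\bx_0\|) < 0$ for $\bx_0 \ne 0$, and the $t=0$ term gives the lower bound $V(\bx_0) \ge g(\|\bx_0\|)$, so $\alpha_1 := g$. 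The only real work is making the series converge and bounding it above: since $\beta$ may decay arbitrarily slowly, I would first apply Sontag's $\classKL$-reshaping lemma to obtain $\theta_1, \theta_2 \in \classK_\infty$ with $\beta(s,t) \le \theta_1(\theta_2(s) e^{-t})$, then set $g := \theta_1^{-1}$, so that $g(\|\bx_t\|) \le \theta_2(\|\bx_0\|)e^{-t}$ and $V(\bx_0) \le \tfrac{e}{e-1}\,\theta_2(\|\bx_0\|) =: \alpha_2(\|\bx_0\|) \in \classK_\infty$.

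\textbf{Main obstacle.} I expect the technical crux to lie in the two reshaping steps rather than in the algebra: in sufficiency, converting the merely pointwise strict decrease into a uniform class-$\classK$ rate $\alpha_3$ and running the comparison lemma to manufacture $\beta \in \classKL$ (this is precisely where continuity of $f$ and $V$ is indispensable and where naive arguments break down); and in necessity, the appeal to Sontag's lemma to tame an arbitrarily slow $\classKL$ decay into a summable, exponentially bounded quantity. If continuity of the constructed $V$ is additionally required, it would follow from continuous, locally uniform dependence of the trajectory map $\bx_0 \mapsto \bx_t$ on the initial condition.
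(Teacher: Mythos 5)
The paper itself does not prove this proposition: it is stated as classical background (the discrete-time Lyapunov characterization of GAS, in the spirit of \cite{khalil2002nonlinear}), so there is no in-paper argument to compare yours against. Judged on its own merits, your proposal is the standard textbook proof and is sound at the level of a sketch: sufficiency via monotonicity of $t \mapsto V(\bx_t)$, a compactness argument to rule out a positive limit, extraction of a class-$\classK$ decrease rate, and the discrete-time comparison lemma to manufacture $\beta \in \classKL$; necessity via the Massera-type sum $V(\bx_0) = \sum_{t} g(\|\bx_t\|)$ with $g = \theta_1^{-1}$ chosen through Sontag's $\classKL$ lemma so that the series is dominated by $\theta_2(\|\bx_0\|)e^{-t}$ and telescopes. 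Three caveats. First, a typo: the telescoping gives $V(f(\bx_0)) - V(\bx_0) = -g(\|\bx_0\|) < 0$, i.e. $V(f(\bx_0)) < V(\bx_0)$, not $V(f(\bx_0)) < 0$ as written. Second, the strict decrease cannot hold at $\bx = \overline{\bx}$ itself: GAS forces $f(\overline{\bx}) = \overline{\bx}$, so the difference vanishes there; the proposition's ``for all $\bx$'' must be read as ``for all $\bx \neq \overline{\bx}$'', which is what your construction delivers (an imprecision of the paper's statement, not of your proof). Third, you are right that continuity of $f$ and $V$ is indispensable in the sufficiency direction and absent from the paper's statement; making it explicit is the correct call, and the continuity of the constructed $V$ does follow from locally uniform convergence of the dominated series. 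The one step your sketch glosses is that the rate $\alpha_3$ and hence the comparison-lemma output $\beta$ are built per sublevel set, so assembling a single $\classKL$ function requires a (routine but nontrivial) monotone patching argument across sublevel sets; this should be acknowledged or carried out in a full write-up.
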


\begin{proposition}[\cite{postoyan2014stability}, Theorem 1]\label{prop:op_control}Let $f$ be continuous and $c$ a non-negative, continuous cost function such that $c(\bx,\bu) \geq \alpha(\|\bx - \bar \bx\|)$ for some $\alpha \in \classK_{\infty}$, $\overline{\bx}$. For a givem policy $\pi$, consider the cost $J$ and cost-to-go $V^{\pi}$,
\begin{align*}
     V^{\pi}(\bx)&:= J(\pi|f,r,\bx_0=\bx) := \sum_{k=0}^{\infty} c(\bx_k,\pi(\bx_k)),
\end{align*}
where above the dynamics are the closed loop dynamics $\bx_{t+1} = f_{\mathrm{cl}}^{\pi}(\bx_t,\bu_t) := f(\bx_t,\pi(\bx_t))$, with initial state $\bx_0$.
If $V^{\pi}(\bx) \leq \sigma(\|\bx - \overline{\bx}\|)$ for some $\sigma \in \classK_{\infty}$, closed-loop dynamics $f_{\mathrm{cl}}^{\pi}(\bx)$ is globally-asymptotically stable to $\overline{\bx}$, where $V^{\pi}$ is a GAS-Lyapunov function certifying stability.
\end{proposition}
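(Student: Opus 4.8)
The plan is to verify directly that $V^{\pi}$ meets the three requirements of the GAS Lyapunov Function proposition --- a two-sided $\classK_\infty$ sandwich $\alpha_1(\|\bx-\overline{\bx}\|)\le V^{\pi}(\bx)\le \alpha_2(\|\bx-\overline{\bx}\|)$ together with a strict per-step decrease --- and then invoke that proposition to conclude GAS to $\overline{\bx}$ with $V^{\pi}$ as the certificate.

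First I would record that $V^{\pi}$ is finite and well-defined: the hypothesis $V^{\pi}(\bx)\le \sigma(\|\bx-\overline{\bx}\|)$ with $\sigma\in\classK_\infty$ simultaneously supplies the upper bound $\alpha_2:=\sigma$ and guarantees the infinite sum converges. For the lower bound, since every summand $c(\bx_k,\pi(\bx_k))$ is non-negative, the sum dominates its first term, giving $V^{\pi}(\bx)\ge c(\bx,\pi(\bx))\ge \alpha(\|\bx-\overline{\bx}\|)$, so $\alpha_1:=\alpha\in\classK_\infty$ works. This already establishes the sandwich $\alpha(\|\bx-\overline{\bx}\|)\le V^{\pi}(\bx)\le \sigma(\|\bx-\overline{\bx}\|)$.

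Second I would derive the decrease condition via the dynamic-programming (Bellman) identity. Because $f_{\mathrm{cl}}^{\pi}$ is deterministic and time-invariant, the trajectory started from $\bx_1 = f_{\mathrm{cl}}^{\pi}(\bx)$ is exactly the tail of the trajectory started from $\bx$, so peeling off the first term and reindexing (legitimate by the finiteness from Step 1) yields $V^{\pi}(\bx)=c(\bx,\pi(\bx))+V^{\pi}(f_{\mathrm{cl}}^{\pi}(\bx))$. Rearranging gives $V^{\pi}(f_{\mathrm{cl}}^{\pi}(\bx))-V^{\pi}(\bx)=-c(\bx,\pi(\bx))\le -\alpha(\|\bx-\overline{\bx}\|)<0$ for every $\bx\neq\overline{\bx}$; at $\bx=\overline{\bx}$ the upper bound plus non-negativity force $V^{\pi}(\overline{\bx})=0$ and hence $\overline{\bx}$ to be an equilibrium, so the difference there is $0$. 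This is precisely the strict-decrease requirement.

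The one step genuinely deserving care is bridging pointwise strict decrease to a bona fide $\classKL$ convergence estimate, since in discrete time strict decrease alone need not yield asymptotic stability without extra regularity. Rather than leaning on that, I would combine the decrement with the upper sandwich: from $\|\bx-\overline{\bx}\|\ge \sigma^{-1}(V^{\pi}(\bx))$ one obtains $V^{\pi}(f_{\mathrm{cl}}^{\pi}(\bx))\le V^{\pi}(\bx)-\alpha(\sigma^{-1}(V^{\pi}(\bx)))$, i.e.\ the decrement is lower-bounded by a $\classK_\infty$ function of the current value. A standard comparison lemma then produces a $\classKL$ bound on $V^{\pi}(\bx_t)$, which propagates back through the sandwich to a $\classKL$ bound on $\|\bx_t-\overline{\bx}\|$. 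I expect this comparison/recursion step --- making the decay rate uniform --- to be the main technical obstacle, whereas the sandwich bounds and the Bellman identity are essentially bookkeeping.
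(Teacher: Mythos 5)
You cannot be compared against an internal argument here: the paper states \Cref{prop:op_control} as an imported result (Theorem 1 of \cite{postoyan2014stability}) and gives no proof of its own, so the only question is whether your blind proof stands on its own merits. It does, and it is the standard route. The sandwich $\alpha(\|\bx-\overline{\bx}\|)\le V^{\pi}(\bx)\le\sigma(\|\bx-\overline{\bx}\|)$ is immediate as you say; the Bellman identity $V^{\pi}(\bx)=c(\bx,\pi(\bx))+V^{\pi}(f_{\mathrm{cl}}^{\pi}(\bx))$ is legitimate because the hypothesis makes every tail sum finite; and your equilibrium check is sound, since $V^{\pi}(\overline{\bx})=0$ squeezes every summand along the trajectory from $\overline{\bx}$ to zero, and $\alpha\in\classK_{\infty}$ vanishes only at zero, forcing that trajectory to sit at $\overline{\bx}$. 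You also correctly identified, and correctly handled, the one genuinely delicate step: in discrete time, pointwise strict decrease alone does not yield a $\classKL$ estimate, and one cannot repair this by a compactness argument on $\bx\mapsto V^{\pi}(f_{\mathrm{cl}}^{\pi}(\bx))-V^{\pi}(\bx)$ because $V^{\pi}$ need not be continuous ($\pi$ is not assumed continuous, and even if it were, the infinite sum need not preserve continuity). Re-expressing the decrement through the sandwich as $V^{\pi}(f_{\mathrm{cl}}^{\pi}(\bx))\le V^{\pi}(\bx)-\alpha(\sigma^{-1}(V^{\pi}(\bx)))$ makes the decay rate a $\classK_{\infty}$ function of the value itself, after which a standard discrete-time comparison lemma (Jiang--Wang style; the usual technicality is to first replace $\alpha\circ\sigma^{-1}$ by a smaller positive-definite function so that the one-step recursion is monotone) gives $V^{\pi}(\bx_t)\le\beta_V(V^{\pi}(\bx_0),t)$ with $\beta_V\in\classKL$, and then $\|\bx_t-\overline{\bx}\|\le\alpha^{-1}\bigl(\beta_V(\sigma(\|\bx_0-\overline{\bx}\|),t)\bigr)$ is the required $\classKL$ bound. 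So your proof is complete modulo that quoted comparison lemma, a dependence that any proof of this statement must carry.
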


\Cref{prop:op_control} relates stability of perturbations of closed loop dynamics under policy $\pi$ to the solution of suitable optimal control problem involving $\pi$. For GAS-stability, this connection is remarkably succinct, and can be extended to more quantitative notions of stability \cite{freeman1996inverse, krstic1998inverse}.

\textbf{Incremental stability.}  GAS and other  similar notions are limited to stability to a fixed point. These can be generalized to input-to-state stability \citep{sontag1995input} which ensures perturbations to a nominal trajectory converge, for large $t$, to the same limiting trajectory. However, for fixed  $t$, trajectories in input-to-state stable systems may be pathologically sensitive to perturbations. 
\begin{example}
\label{ex:iss_instability}
Consider the planar, piecewise-affine system,
\begin{align*}
    f(\bx_t,\bu_t) = \begin{cases}
        A_{1} \bx_t + \bu_t &\textrm{ if } \langle \be_1, \bx\rangle \geq 0,  \\
        A_{2} \bx_t + \bu_t &\textrm{ if } \langle \be_1, \bx\rangle < 0.
    \end{cases}
\end{align*}
where $A_{1} = c \cdot R(\theta), A_2 = c \cdot R(-\theta)$ for $\theta$-rotation matrix $R(\theta)$ where $c < 1, \theta \leq 1$. We can observe that the system stabilizes to the origin and is exponentially input-to-state stable, but small perturbations in initial state or input may yield divergent trajectories.

\end{example}

In this paper, we desire robustness of the \emph{entire-trajectory} to perturbations, rather than its limiting behavior. This property is known as \textbf{incremental-input-to-state stability}~\cite{Angeli2002,Demidovich1967, Pavlov2004}, and considers the contractivity of trajectories to each other in a pairwise fashion \cite{giaccagli2023further}. Aside from the apparent appeal of this form of robustness, recent work \citep{pfrommer2022tasil,block2023provable,simchowitz2025pitfalls} demonstrates that incremental stability enables learning policies from expert demonstration.

For convenience, this manuscript considers a version of incremental stability encoding stability around any nominal (unperturbed) trajectory. In contrast, the standard version of $\delta$ISS permits input perturbations for both trajectories under consideration \cite{angeli2009further}. \Cref{sec:lyapunov_proof} sketches the extension of our results to the general version.

\begin{definition}[Nominal Incremental-Input-to-State Stability]
\label{def:nominal_iiss}
For a system $f$, a policy $\pi$ is said to be $(\gamma,\beta)$-nominally-incrementally-input-to-state stablizing (nominal-$\delta$ISS) for\footnotemark[1] $\gamma \in \classK, \beta \in \classKL$ if, for all two states $\bx_0, \bx_0' \in \Xspace$, $t \geq 0$, and sequences of input perturbations $\{\delta\bu_t\}_{t \geq 0}$,  it holds that,
\begin{align*}
    \|\bx_t' - \bx_t\| \leq \beta(\|\bx_0' - \bx_0\|, t) + \gamma\left(\max_{0 \leq k < t} \|\delta\bu_k\|\right),
\end{align*}
where $\bx_{k+1} = f(\bx_k, \pi(\bx_t)), \bx_{t+1}' = f(\bx_t', \pi(\bx_t') + \delta\bu_t)$,
\end{definition}


Unlike \Cref{ex:iss_instability}, $\delta$ISS ensures that small perturbations to inputs must lead to small perturbations in state for \emph{all times $t$}.
Though one can provide a Lyapunov characterization of  stability around a trajectory,  $\delta$ISS requires stability uniformly over \emph{all trajectories under $f^{\pi}$, as the dynamics vary}. The following provides a sufficient Lyapunov characterization. \footnote{ For a necessary and sufficient Lyapunov characterization of the  general $\delta$ISS condition, see \cite{angeli2009further}.} 

\begin{definition}[Nominal-$\delta$ISS Lyapunov function] A function $V: \Xspace \times \Xspace \to \R$ is called a nominal-$\delta$ISS Lyapunov function if for all $\bx,\bx' \in \Xspace$ and some $\alpha_1,\alpha_2 \in \mathcal{K}_{\infty}$,
\begin{align*}
    \alpha_1(\|\bx' - \bx\|) \leq V(\bx', \bx) \leq \alpha_2(\|\bx' - \bx\|)
\end{align*}
and there exists $\alpha_3 \in \classK_\infty,\rho \in \classK$ such that
\begin{align*}
    &V(f(\bx', \pi(\bx') + \delta\bu), f(\bx, \pi(\bx))) - V(\bx', \bx) \\
    &\quad\quad \leq -\alpha_3(\|\bx' - \bx\|) + \rho(\|\delta\bu\|).
\end{align*}
\end{definition}
\begin{proposition}\label{prop:lyapunov}A policy $\pi$ is nominally-incrementally-input-to-state stabilizing if there exists a nominal-$\delta$ISS Lyapunov function.
\end{proposition}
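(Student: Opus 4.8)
The plan is to reduce the claim to a scalar comparison argument for the nonnegative sequence $V_k := V(\bx_k', \bx_k)$ evaluated along the two trajectories, which is the discrete-time analogue of the standard ISS-Lyapunov-implies-ISS theorem. Fix $\bx_0, \bx_0'$, a horizon $t$, and a perturbation sequence $\{\delta\bu_k\}_{k \ge 0}$, and abbreviate $\bar u := \max_{0 \le k < t}\|\delta\bu_k\|$. First I would instantiate the dissipation inequality of the local-$\delta$ISS Lyapunov function at the point $(\bx_k', \bx_k, \delta\bu_k)$; since $f(\bx_k', \pi(\bx_k') + \delta\bu_k) = \bx_{k+1}'$ and $f(\bx_k, \pi(\bx_k)) = \bx_{k+1}$ by construction, this yields, for every $0 \le k < t$,
\[
V_{k+1} - V_k \;\le\; -\alpha_3(\|\bx_k' - \bx_k\|) + \rho(\|\delta\bu_k\|) \;\le\; -\alpha_3(\|\bx_k' - \bx_k\|) + \rho(\bar u).
\]

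Next I would remove the state dependence on the right-hand side using the sandwich bound $V_k \le \alpha_2(\|\bx_k' - \bx_k\|)$, which gives $\|\bx_k' - \bx_k\| \ge \alpha_2^{-1}(V_k)$ and hence $\alpha_3(\|\bx_k' - \bx_k\|) \ge \chi(V_k)$ with $\chi := \alpha_3 \circ \alpha_2^{-1} \in \classK_\infty$. The recursion then collapses to the purely scalar form $V_{k+1} \le V_k - \chi(V_k) + \rho(\bar u)$, valid for $0 \le k < t$. The heart of the argument is the comparison lemma asserting that such a recursion admits an estimate $V_k \le \beta_V(V_0, k) + \gamma_V(\rho(\bar u))$ for some $\beta_V \in \classKL$ and $\gamma_V \in \classK$.

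I would prove this lemma by a two-regime analysis around the threshold $\vartheta := \chi^{-1}(2\rho(\bar u))$. Whenever $V_k \ge \vartheta$ we have $\rho(\bar u) \le \tfrac12 \chi(V_k)$, so $V_{k+1} \le V_k - \tfrac12\chi(V_k)$: the iterate strictly decreases by a class-$\classK$ amount, and a discrete comparison argument bounds it above by a $\classKL$ function $\beta_V(V_0, k)$ while it remains in this regime (the case $\tfrac12\chi(V_k) > V_k$ causes no trouble, since $V_{k+1} \ge 0$ automatically and the iterate simply drops below $\vartheta$). Conversely, the band $\{V \le \vartheta + \rho(\bar u)\}$ is forward invariant: from below $\vartheta$ one step raises $V$ by at most $\rho(\bar u)$, while from within the band the dissipation term only decreases it. Hence once $V_k$ enters the band it stays there, giving the steady-state bound $\gamma_V(\rho(\bar u)) := \vartheta + \rho(\bar u) = \chi^{-1}(2\rho(\bar u)) + \rho(\bar u) \in \classK$, and combining the transient and steady-state bounds yields the stated estimate.

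Finally I would translate back to the state using $\alpha_1(\|\bx_t' - \bx_t\|) \le V_t$ and $V_0 \le \alpha_2(\|\bx_0' - \bx_0\|)$, together with the weak triangle inequality $\alpha_1^{-1}(a + b) \le \alpha_1^{-1}(2a) + \alpha_1^{-1}(2b)$, to obtain
\[
\|\bx_t' - \bx_t\| \;\le\; \alpha_1^{-1}\!\big(2\beta_V(\alpha_2(\|\bx_0' - \bx_0\|), t)\big) + \alpha_1^{-1}\!\big(2\gamma_V(\rho(\bar u))\big),
\]
whose two summands define $\beta(s,t) := \alpha_1^{-1}(2\beta_V(\alpha_2(s), t)) \in \classKL$ and $\gamma(s) := \alpha_1^{-1}(2\gamma_V(\rho(s))) \in \classK$, exactly matching \Cref{def:local_iiss}. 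I expect the main obstacle to be the scalar comparison lemma of the third paragraph --- constructing the $\classKL$ decay estimate and verifying forward invariance of the band simultaneously --- since the remaining steps are routine manipulations that preserve membership in $\classK$ and $\classKL$.
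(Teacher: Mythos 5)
Your proposal is correct and takes essentially the same route as the paper: both reduce the bivariate dissipation inequality to a scalar recursion in $V_k$ via $\alpha_3 \circ \alpha_2^{-1}$, split the analysis into a strict-decrease regime versus a forward-invariant region whose size is set by the input magnitude, and translate back through the $\alpha_1,\alpha_2$ sandwich to obtain the $\classKL$-plus-$\classK$ bound. The only divergence is technical bookkeeping: where you prove the discrete comparison lemma directly (threshold $\chi^{-1}(2\rho(\bar u))$ and band invariance), the paper instead invokes Lemma 2.4 of \cite{sontag1995input} to replace the decrease rate by $\hat\alpha_4$ with $\mathrm{id}-\hat\alpha_4 \in \classK$ and then appeals to ``standard Lyapunov arguments,'' so your write-up is, if anything, the more self-contained of the two.
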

\begin{proof}See  \Cref{sec:lyapunov_proof}.
\end{proof}

Whereas $\delta$ISS requires the added complication of bivariate Lyapunov functions, GAS and non-incremental variants \cite{sontag1995input} can be verified directly using costs-to-go.  It is natural to ask:

\begin{quote}
    \textbf{Question 1:} \emph{Can we verify nominal-$\delta$ISS in terms of standard cost-to-go functions, as is done for GAS in \Cref{prop:op_control}?  }
\end{quote}
Moreover, even the characterization of far simpler notions of stability, such as GAS, require stringent restrictions on the cost functions (e.g. the coercivity in \Cref{prop:op_control}). In many modern applications, such as those encountered in reinforcement learning, it is popular to consider cost functions which do not have this property \cite{kaiser2019model}. 
\begin{quote}
    \textbf{Question 2:} \emph{Can we dispense with the stringent conditions on costs required by traditional Lyapunov characterizations of stability?}
\end{quote}

\section{The 
''Test Function'' Perspective from Reinforcement Learning.}

In contrast to minimizing costs, reinforcement learning (RL) prefers the semantics of maximizing a reward function $r: \cX \times \cU \to \R$. It is often popular to consider the \emph{discounted reward} for some discount factor $\decay \in (0,1)$. In this case, the equivalence cumulative ``cost" of the optimal policy $\pi$ is obtained by minimizing,
\begin{align}
    J_{\gamma}(\pi\mid f,r,\bx_0) := -\textstyle \frac{1}{1-\decay} \sum_{t \ge 0 } \decay^t r(\bx_t,\bu_t),
\end{align}
where above $\bu_t = \pi(\bx_t)$, $\bx_{t+1} = f(\bx_t,\bu_t)$. Another popular alternative is the \emph{finite-horizon reward} over some $H$, which minimizes the cumulative cost,
\begin{align}\label{eq:finite_horz_reward}
    J_{H}(\pi\mid f,r,\bx_0) := -\textstyle \sum_{t=0}^{H} r(\bx_t,\bu_t).
\end{align}

In principle, costs and rewards are equivalent: given a reward function $r$, one can construct a cost function  $c(\bx,\bu) = - r(\bx,\bu)$  and vice versa. However, the semantics costs and rewards are quite distinct. Control costs measure \emph{deviation} from a desired state or trajectory, whereas in reinforcement learning, there may be a myriad of desired behaviors to be penalized or encouraged. 

\begin{example}\label{ex:sample_sys}
Consider the reward $r(\bx,\bu) = \|\bx\|$ for the system $f(x,u) = \mathrm{proj}_{K}(x + u)$, where $\mathrm{proj}_K$ denotes projection onto the set $K$. Under $r$, the optimal policy controls the system to the point in $K$ furthest away from the origin. We can observe that $\pi$ is stable around this point, despite the equivalent ``cost'' formulation, $-\|\bx\|$, being unbounded from below and radially symmetric.
\end{example}

\textbf{Rewards as test functions.} In RL, notably inverse reinforcement learning \cite{ho2016generative}, similar to inverse optimal control \cite{levine2012continuous}, one takes the perspective that desired control behavior may be difficult to describe in closed form. Instead,  rewards function $r(\cdot,\cdot)$ play the role of \textbf{test-functions}, such that a policy which has high reward for each such function is one that is qualitatively desirable in its behavior (e.g. indistinguishable from an idealized expert). We adopt a similar approach here:
\begin{quote}\textbf{Our Perspective}: \emph{When evaluating trajectory-wise stability, what is salient is not any particular coercive cost centered at a given origin point, but rather the \textbf{discriminative power} of a family of rewards as test functions}.
\end{quote} 
We will argue that the regularity (namely, continuity) properties of the cost-to-go which hold uniformly  over suitably expressive classes of reward test-functions are \textbf{equivalent} to nominal-$\delta$ISS. This resolves Question 1, by characterizing $\delta$ISS in terms of traditional cost-to-goes. Moreover, our approach simultaneously resolves Question 2, by replacing coercivity of a single-cost with discriminative power over a family of rewards. 

\textbf{Sensitive Reward Function Classes. } We propose the following notion of sensitivity to describe the discriminative power of a family of reward functions.

\begin{definition}[Sensitive Reward Function Class] We say that a class of reward functions $\Rc$ is $(C, \alpha, c)$-sensitive for $C,c \geq 1, \alpha \in (0,1]$ provided (a) all $r \in \Rc$ are $(C,\alpha)$-H\"older-continuous in $\bx,\bu$ and (b) for any $\bx,\by \in \Xspace, \bu,\bw \in \Uspace$,
\begin{align*}
    c \|\bx - \by\|^{\alpha} \leq \sup_{r \in \Rc} \frac{1}{C}|r(\bx,\bu) - r(\by,\bw)|.
\end{align*}
\end{definition}

\begin{definition}[H\"older-continuous functions]A function $f: \mathcal{D} \subset \R^{d} \to \R$ is locally $(C,\alpha)$-H\"older-continuous at $\bx$ if, for any $\by \in \mathcal{D}$,
\begin{align*}
    |f(\bx) - f(\by)| \leq C\|\bx - \by\|^{\alpha}.
\end{align*}
We say that a function is globally $(C,\alpha)$-H\"older-continuous (or just H\"older-continuous) if it is locally continuous for all $x \in \mathcal{D}$.
\end{definition}

\begin{example}The class of $(C,\alpha)$-Holder-continuous reward functions is $(C,\alpha,1)$-sensitive.
\end{example}

A $(C,\alpha,c)$-sensitive reward function class $\Rc$ is sufficiently rich so as to disambiguate any two points with a factor of at least $c$. In practice, we can consider the reward function class which disambiguates solely between success and failure states. Although we take $\|\cdot\|$ to be the standard $\ell_2$ norm for compatibility with the standard notion of $\delta$ISS, our results can be generalized to arbitrary metric or pseudometric.\footnote {In the case of a pseudometric, $\Rc$ can potentially consist of only a single reward function.}

Our use of reward functions as ``test" functions to discriminate between states is reminiscent of the function classes used to define Integral Probability Metrics \cite{muller1997integral} such as the distributional $1$-Wasserstein or TV distances. However, since $\Xspace$ is finite-dimensional, $\Rc$ need not be infinite to discriminate all points in $\mathcal X$. 
\begin{example}\label{ex:small_rc} For any $C \geq 0,\alpha \in (0,1]$, and orthonormal $\mathcal{V} \subset \R^{d_x}$ where $|\mathcal{V}| = d_x$,
\begin{align*}
\Rc = \{\, (\bx,\bu) \to \, C\mathrm{sign}(\bv^\top\bx)|\bv^\top \bx|^{\alpha} \, : \bv \in \mathcal{V}\},
\end{align*}
is a $(C,\alpha,d_x^{-\alpha/2})$-sensitive class of rewards.
\end{example}

\begin{remark}[Action-Dependent Rewards and Costs]
Although reward functions may consider $\bu$ in addition to $\bx$, our results only necessitate sensitivity with respect to the state. Thus, our equivalence also holds for rewards which are purely state-dependent.
\end{remark}

\textbf{RL with General Discount Schedules.} We consider a general framework for reward accumulation, which encompasses both constant-discounting and fixed-horizon reward signals. As we shall demonstrate, the nature of the  equivalence between nominal-$\delta$ISS and cost-to-go regularity has a nuanced relationship with the choice of discount schedule.

\begin{definition}[Discount Schedule]\label{def:reward} A discount schedule $\sched := (\decay_t)_{t \geq 1}$ is a sequence of nonnegative scalars, not all zero. Given this, we define the cumulative decay schedule $\decaybar_t := \prod_{k=1}^{t}\decay_k$. We say $\sched$ is proper if $\schednorm := \sum_{t=0}^{\infty}\decaybar_t < \infty$, in which case we let $P_{\schedbar}$ to denote a distribution over timesteps whose density is proportional to $\decaybar_t$.
\end{definition}
\begin{example}[Constant Exponential Discount Schedule]\label{ex:exonential_discount}
We say that $\sched$ is a constant-exponential discount schedule if $\decay_t = \decay \in (0,1) \, \, \forall t$. Note that $\schednorm = (1 - \decay)$.
\end{example}

\begin{example}[Finite Horizon Schedule]\label{ex:horiz_discount}
A discount schedule is an $H$-finite-horizon schedule if $\decay_{t} = 1$ for $t \leq H$ and $\decay_{t} = 0$ for $t > H$. Thus $\schednorm = H$.
\end{example}

A proper discount schedule does not require that $|\lambda_t| \le 1$, only that $\schednorm$ is finite, and hence  both $\decay_t$, and $ \decaybar_t$ must converge to $0$ sufficiently quickly. We now introduce the value function, the main object of analysis in our paper.

\begin{definition}[Reward Value Function and Action-Value Function]\label{def:value_func}
Fix a dynamics $f$, reward function $r: \Xspace \to \R$, and a discount schedule $(\decay_t)_{t\geq1}$ as in \Cref{def:reward}. For a policy $\pi$, we define the value function $V^{\pi,r}_{t,\sched}$ and action-value function $Q^{\pi,r}_{t,\sched}$ for time $t$ by,
\begin{align*}
    Q^{\pi, r}_{t,\sched}(\bx,\bu) &:= r(\bx,\bu) + \decay_{t+1} V^{\pi,r}_{t+1,\sched}(f(\bx,\bu)), \\
    V^{\pi,r}_{t,\sched}(\bx) &:= Q^{\pi,r}_{t,\sched}(\bx,\pi(\bx)).
\end{align*}
In particular, let
$\Vpi := V^{\pi,r}_{0,\decay}$, $\Qpi := Q^{\pi,r}_{0,\decay}$.
\end{definition}

\section{Equivalence}

Our first contribution is the following equivalences between the H\"older-continuity of $\Vpi,\Qpi$ and the nominal-$\delta$ISS of $\pi$. 

\begin{theorem}[Regularity Under Test Functions and nominal-$\delta$ISS]\label{thm:main_equiv}
Consider any $f,\pi$, such that $\pi$ is $L$-Lipschitz for $L \geq 1$, some constant $\rho \geq 0$, and a $(C,\alpha, c)$-sensitive class of test functions $\Rc$ for some $\alpha \in (0,1], C \geq 1$. Let $\kappa: \R_{\geq 0} \to [0,1]$ be a nonincreasing function such that $\kappa(0) = 1$, $\|\kappa^{\alpha}\|_1 \leq \infty$. Then the following are equivalent:
\begin{enumerate}
    \item[(1)] There exists $c_1 > 0$, such that $\pi$ is $(\gamma,\beta)$-nominal-$\delta$ISS for $\gamma \in \classK_{\infty},\beta \in \classKL$ where,
    \begin{align*}
        \gamma(x) \leq c_1 x^{\rho}, \quad  \beta(x,t) \leq c_1 \kappa(t) x
    \end{align*}
    \item[(2)] There exists $c_2 > 0$ such that, for any $r \in \Rc$ and proper discount schedule $\sched$, the value function $\bx \to \Vpi(\bx)$ is $(C c_{\sched}\schednorm, \alpha)$-H\"older-continuous and, for any $\bx$, $\delta\bu \to \Qpi(\bx,\pi(\bx) + \delta\bu)$ is locally $(C c_{\sched}\schednorm, \alpha\rho)$-H\"older-continuous around $\delta\bu = 0$, where $c_{\sched} \leq c_2 \cdot \Exp_{t \sim P_{\schedbar}}[\kappa(t)^{\alpha}]$.
\end{enumerate}
\end{theorem}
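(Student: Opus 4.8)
The plan is to prove both implications by controlling, timestep by timestep, the per-step reward gap $g_t^r := r(\bx_t,\pi(\bx_t)) - r(\by_t,\pi(\by_t))$ between two trajectories in terms of the state gap $\|\bx_t - \by_t\|$. Unrolling \Cref{def:value_func} gives $\Vpi(\bx) = \sum_{t\ge 0}\decaybar_t\, r(\bx_t,\pi(\bx_t))$, so that $\Vpi(\bx_0)-\Vpi(\by_0) = \sum_{t\ge 0}\decaybar_t\, g_t^r$; moreover the H\"older constant in (2) rewrites as $C c_\sched \schednorm \le C c_2 \sum_{t\ge 0}\decaybar_t\,\kappa(t)^\alpha$, a $\decaybar$-weighted sum of $\kappa^\alpha$. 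I will also use that $\kappa \in [0,1]$ and $\alpha \le 1$ force $\kappa \le \kappa^\alpha$ pointwise, whence $\|\kappa\|_1 \le \|\kappa^\alpha\|_1 < \infty$.

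For $(1)\Rightarrow(2)$ I would fix $r \in \Rc$ and a proper $\sched$ and estimate $g_t^r$ directly: H\"older-continuity of $r$ and $L$-Lipschitzness of $\pi$ give $|g_t^r| \lesssim C\|\bx_t - \by_t\|^\alpha$ (up to an $L$-dependent constant), and the unperturbed local-$\delta$ISS bound $\|\bx_t - \by_t\|\le c_1\kappa(t)\|\bx_0-\by_0\|$ then yields $|\Vpi(\bx_0) - \Vpi(\by_0)| \le \sum_t \decaybar_t |g_t^r| \lesssim C\big(\sum_t \decaybar_t \kappa(t)^\alpha\big)\|\bx_0-\by_0\|^\alpha$, matching the claimed $(Cc_\sched\schednorm,\alpha)$-modulus with $c_\sched \le c_2\Exp_{t\sim P_{\schedbar}}[\kappa(t)^\alpha]$. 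For the action-value claim, a lone perturbation $\delta\bu$ at time $0$ is, by local-$\delta$ISS with zero initial gap, propagated as $\|\bx_t^+ - \bx_t\| \le c_1^2\kappa(t-1)\|\delta\bu\|^\rho$ for $t\ge1$; combined with the immediate term $|r(\bx,\pi(\bx)+\delta\bu)-r(\bx,\pi(\bx))|\le C\|\delta\bu\|^\alpha$ (which is $\le C\|\delta\bu\|^{\alpha\rho}$ for small $\|\delta\bu\|$ once $\rho\le1$) this produces the local $(\,\cdot\,,\alpha\rho)$-H\"older modulus of $\delta\bu \mapsto \Qpi(\bx,\pi(\bx)+\delta\bu)$.

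The substance lies in the converse $(2)\Rightarrow(1)$, where the decay $\kappa(t)$ must be \emph{extracted} from schedule-averaged regularity; this is the step I expect to be the main obstacle, because the constraint $\decaybar_0 = 1$ prevents isolating the time-$t$ contribution with any single schedule. The device is to exploit the full freedom of discount schedules: any nonnegative sequence $(\decaybar_t)$ with $\decaybar_0 = 1$ and support an initial segment is realizable, so for fixed $t$ I take the cumulative weights $(1,\epsilon,\dots,\epsilon,N)$ (mass $N$ at time $t$, mass $\epsilon$ at $1,\dots,t-1$). Applying the bound $\big|\sum_s \decaybar_s g_s^r\big| \le Cc_2\big(\sum_s \decaybar_s\kappa(s)^\alpha\big)\|\bx_0-\by_0\|^\alpha$ to this schedule and sending $\epsilon\to 0$ then $N\to\infty$ isolates the single term, giving $|g_t^r| \le Cc_2\,\kappa(t)^\alpha\|\bx_0-\by_0\|^\alpha$ for every $r \in \Rc$. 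Taking the supremum over $r$ and invoking $(C,\alpha,c)$-sensitivity yields $c\|\bx_t-\by_t\|^\alpha \le c_2\kappa(t)^\alpha\|\bx_0-\by_0\|^\alpha$, i.e. the bound $\beta(x,t)\le c_1\kappa(t)x$ with $c_1 = (c_2/c)^{1/\alpha}$.

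Finally, the same isolation trick applied to the action-value H\"older bound shows that a single perturbation $\delta\bu$ at time $0$ moves the state at time $s$ by at most $c_1\kappa(s)\|\delta\bu\|^\rho$. To obtain the general input-gain, I would telescope over perturbation times using the hybrid trajectories $\bw^{(j)}$ that apply $\delta\bu_0,\dots,\delta\bu_{j-1}$ and then run nominally: consecutive hybrids differ by exactly one perturbation issued from a common state, so $\|\bx_t' - \bw^{(0)}_t\| \le \sum_{j<t} c_1\kappa(t-j)\|\delta\bu_j\|^\rho \le c_1\|\kappa\|_1(\max_j\|\delta\bu_j\|)^\rho$, while $\|\bw^{(0)}_t - \bx_t\|$ is the already-established $\beta$-term; adding the two gives local-$\delta$ISS with $\gamma(x)\le c_1\|\kappa\|_1\,x^\rho$, finite since $\|\kappa\|_1 < \infty$. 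The remaining technical points are bookkeeping of the index shift $\kappa(t-1)$ versus $\kappa(t)$, the restriction to small perturbations forced by the \emph{local} nature of the action-value modulus, and confirming the product-space Lipschitz constants.
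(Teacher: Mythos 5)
Your proof is correct in substance; the forward direction $(1)\Rightarrow(2)$ coincides with the paper's argument, but your converse takes a genuinely different route. The paper proves a three-way equivalence: from (2) it first derives an intermediate condition (3) --- a performance-difference bound in the style of Kakade--Langford, controlling $|V^{\pi,r}_{\sched}(\bx_0) - V^{\pi',r}_{\sched}(\bx_0')|$ for arbitrary time-varying policies $\pi'$ --- and then collapses (3) to (1) by encoding the whole perturbation sequence $\{\delta\bu_k\}$ as a time-varying policy, isolating the time-$t$ reward gap with the schedule $\decay_k = \tau^{-1}$ for $k \le t$ (zero afterwards) in the limit $\tau\to 0$, and invoking sensitivity of $\Rc$. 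You instead go directly $(2)\Rightarrow(1)$: your two-parameter $(\epsilon,N)$ schedule is the same isolation device as the paper's $\tau$-schedule (both exploit that proper schedules need not be contractive and that cumulative weights $\decaybar_t$ are freely prescribable so long as they remain positive up to the cutoff), but you apply it separately to $V$ (extracting the state-perturbation decay $\beta(x,t)\le c_1\kappa(t)x$) and to $Q$ (extracting the decay of a single input perturbation), and then assemble arbitrary perturbation sequences by telescoping over hybrid trajectories, paying a factor $\|\kappa\|_1 \le \|\kappa^{\alpha}\|_1 < \infty$ in the gain $\gamma$. This hybrid-trajectory telescoping is exactly what substitutes for the performance-difference lemma; it is legitimate because (2) provides the local $Q$-modulus at \emph{every} state $\bx$, hence at every hybrid state, and time-invariance of $f,\pi$ lets you shift the single-perturbation bound in time. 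As for what each approach buys: yours is more elementary and self-contained for the stated equivalence $(1)\Leftrightarrow(2)$, while the paper's detour yields the strictly stronger appendix theorem, whose condition (3) is of independent interest for imitation-learning-type guarantees. One caveat you share with the paper (and, to your credit, explicitly flag): in $(1)\Rightarrow(2)$ the instantaneous term $C\|\delta\bu\|^{\alpha}$ can only be absorbed into a modulus proportional to $\|\delta\bu\|^{\alpha\rho}$ when $\rho\le 1$ and $\|\delta\bu\|$ is bounded; the paper performs this absorption silently, so this is not a gap relative to the paper's own proof.
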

\begin{proof}
    See \Cref{sec:main_equiv_proof}.
\end{proof}

\begin{remark}Note that (1) is independent of the choice of $\Rc$. Thus, if (2) holds for any choice of suitably sensitive reward class $\Rc$, it holds for all choices of $\Rc$.
\end{remark}
\begin{remark}Consider the reward and dynamics as in \Cref{ex:sample_sys} and let $\Rc = \{\bx \mapsto \bv^\top \bx: \bv:\|\bv\| = 1\}$. Note that the supremum over reward functions, $\sup_{r \in \Rc} V^{\pi,r}_{\sched}(\bx) = \decay_1\|\bx\|$ is not a Lyapunov function for the system. Thus, $V^{\pi,r}_{\sched}$ only certifies stability in a pairwise fashion when considering the difference of two initial conditions and different reward functions $r$.
\end{remark}
This result formalizes the intuition that, for continuous reward functions and proper reward schedules, $\Vpi(\bx)$ and $\Qpi(\bx,\bu)$ should be insensitive to changes in $\bx$ or $\bu$ when $\pi$ is nominal-$\delta$ISS; in fact, they are equivalent.
\Cref{thm:main_equiv} gives a quantitative characterization of the relation between the H\"older-continuity parameters of $\Qpi,\Vpi$, the sensitivity parameter to input perturbations and the rate at which $\pi$ stabilizes the system, given by $(\gamma,\beta)$, are parameterized by the exponent $\rho$ and function $\kappa(t)$, respectively. For any given $\sched$, the coefficient $c_{\sched}$ scales with a normalized-$\schedbar$-based convolution of $\kappa^{\alpha}(t)$. Thus, the more ``concentrated" $\sched$ is towards further away timesteps, the smaller $c_{\sched}$ must become.

\textbf{Equivalence to $\delta$ISS with regularity under a single discount schedule}. 
\Cref{thm:main_equiv} relates the rate of stability to the cost-to-go regularity under rewards in $\Rc$ and arbitrary \emph{proper} decay schedules. While we cannot remove the dependency on $\Rc$, we can specialize this result to  regularity under a single $\sched$, with the added condition that we must then consider a class of time-varying rewards.

For this subsequent theorem, we consider the natural generalization of $\delta$ISS where $\gamma$ can be $t$-dependent and $\beta$ is not necessarily $\classKL$. For monotonically decreasing $\kappa(t)$ we recover the standard nominal-$\delta$ISS. We extend \Cref{def:value_func} to a sequence of reward functions $(r_t)$ where $V_{s,\sched}^{\pi,(r_t)}$ and $Q_{s,\sched}^{\pi,(r_t)}$ defined analogously to $V_{s,\sched}^{\pi,r}$ and $Q_{s,\sched}^{\pi,r}$ in \Cref{def:value_func}, using reward $r_s$ at timestep $s$.
\begin{theorem}[nominal-$\delta$ISS with a Single Discount Schedule]
\label{thm:single_equiv}Consider any $f,\pi$ where $f$ is continuous and $\pi$ is $L$-Lipschitz.
Let $\sched$ be any (not necessarily proper) non-increasing discount schedule and $\Rc$ a $(C,\alpha,c)$-sensitive, symmetric reward class for some $C \geq 0, \alpha \in (0,1]$. Provided $\Xspace$ is compact, there exists some $\kappa(t)$ where $\kappa(0)=1$, $\kappa(t) \leq (\decaybar_t)^{-1/\alpha}$ and $\|\decaybar_t \kappa^{\alpha}(t)\|_1 \leq \infty$ such that, for any $\rho \geq 0$, the following are equivalent:
\begin{enumerate}
    \item[(1)] There exists $c_1 \geq 0$ such that $\pi$ is $(\gamma,\beta)$-nominal-$\delta$ISS where,
    \begin{align*}
    \gamma(x,t) \leq c_1 \kappa(t)x^{\rho}, \quad \beta(x,t) \leq c_1 \kappa(t)x.
    \end{align*}
    \item[(2)] There exists $c_2 \geq 0$ such that, for any time-varying sequence of rewards $(r_{t})_{t \geq 0}$, $r_t \in \Rc$, the value function $\bx \to V_{\sched}^{\pi,(r_t)}(\bx)$ is $(C c_2, \alpha)$-H\"older-continuous and, for any $\bx \in \Xspace$, $\delta\bu \to Q_{\sched}^{\pi,(r_t)}(\bx,\pi(\bx) + \delta\bu)$ is locally $(C c_{2}, \alpha\rho)$-H\"older-continuous around $\delta\bu = 0$.
\end{enumerate}
\end{theorem}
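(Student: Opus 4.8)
The plan is to prove the two implications separately, after first reducing both value functions to their unrolled form. Writing the closed-loop trajectory from $\bx$ as $\bx_0=\bx$, $\bx_{t+1}=f(\bx_t,\pi(\bx_t))$, \Cref{def:value_func} unrolls to $V_{\sched}^{\pi,(r_t)}(\bx)=\sum_{t\ge0}\decaybar_t\, r_t(\bx_t)$, so that for two initial conditions, $V_{\sched}^{\pi,(r_t)}(\bx)-V_{\sched}^{\pi,(r_t)}(\by)=\sum_{t\ge0}\decaybar_t\,(r_t(\bx_t)-r_t(\by_t))$. A central object will be the induced pseudometric $d(\bx,\by):=\sup_{r\in\Rc}\big(r(\bx)-r(\by)\big)$, which by $(C,\alpha)$-H\"older continuity and $(C,\alpha,c)$-sensitivity satisfies $Cc\,\|\bx-\by\|^{\alpha}\le d(\bx,\by)\le C\,\|\bx-\by\|^{\alpha}$; since the reward may be chosen independently at each timestep and $\Rc$ is symmetric, the worst-case value gap factorizes as $\sup_{(r_t)}\big(V_{\sched}^{\pi,(r_t)}(\bx)-V_{\sched}^{\pi,(r_t)}(\by)\big)=\sum_{t\ge0}\decaybar_t\, d(\bx_t,\by_t)$. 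I will also use the worst-case $t$-step amplification $M_t:=\sup_{\bx\ne\by}\|\bx_t-\by_t\|/\|\bx-\by\|$, and define $\kappa$ from it.

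For (1)$\Rightarrow$(2), I bound the value gap directly. Using H\"older continuity of each $r_t$, $L$-Lipschitzness of $\pi$, and the $\beta$-part of local-$\delta$ISS (no input perturbation, so $\|\bx_t-\by_t\|\le c_1\kappa(t)\|\bx-\by\|$), one gets $|V_{\sched}^{\pi,(r_t)}(\bx)-V_{\sched}^{\pi,(r_t)}(\by)|\le C(1+L)^{\alpha}c_1^{\alpha}\big(\sum_{t\ge0}\decaybar_t\kappa(t)^{\alpha}\big)\|\bx-\by\|^{\alpha}$, and $\sum_t\decaybar_t\kappa^{\alpha}(t)<\infty$ collapses the bracket into $c_2$. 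For the $Q$-function, I compare the unperturbed trajectory against the one with a single perturbation $\delta\bu$ at $t=0$; the $\gamma$-part of local-$\delta$ISS gives $\|\bx_t'-\bx_t\|\le c_1\kappa(t)\|\delta\bu\|^{\rho}$ for $t\ge1$, which, raised to the power $\alpha$ and summed against $\decaybar_t$, yields the $(Cc_2,\alpha\rho)$ local H\"older bound. Here I use that the rewards are state-dependent (\Cref{def:value_func}), so there is no $\|\delta\bu\|^{\alpha}$ term from the current step to spoil the exponent $\alpha\rho$.

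For (2)$\Rightarrow$(1), I extract contraction rates from regularity. Fixing a pair $(\bx,\by)$ and choosing, at each $t$, the reward $r_t\in\Rc$ witnessing $d(\bx_t,\by_t)$ (legal since $\Rc$ is symmetric), every summand is nonnegative, so the factorized identity together with the $(Cc_2,\alpha)$-H\"older bound of (2) gives the key estimate $\sum_{t\ge0}\decaybar_t\|\bx_t-\by_t\|^{\alpha}\le (c_2/c)\,\|\bx-\by\|^{\alpha}$. Its single-term consequence $\decaybar_t\|\bx_t-\by_t\|^{\alpha}\le(c_2/c)\|\bx-\by\|^{\alpha}$ is exactly the $\beta$-bound with $\kappa(t)\le(\decaybar_t)^{-1/\alpha}$; an identical argument applied to a one-step input perturbation, using the local $(Cc_2,\alpha\rho)$-H\"older bound of the $Q$-function, yields the $\gamma$-bound $\|\bx_t'-\bx_t\|\le c_1\kappa(t)\|\delta\bu\|^{\rho}$. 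Promoting these single-perturbation bounds to the full local-$\delta$ISS inequality (arbitrary initial offset plus a whole sequence $\{\delta\bu_k\}$) is then a superposition/telescoping step using time-invariance and subadditivity of $x\mapsto x^{\alpha}$.

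The main obstacle is constructing a single $\kappa$ — independent of $\rho$, since only the $\gamma$-gain carries $\rho$ — that simultaneously dominates the true amplification ($\kappa(t)\gtrsim M_t$, forced by the $\beta$-bound in (1)) and is schedule-summable ($\sum_t\decaybar_t\kappa^{\alpha}(t)<\infty$, forced by (1)$\Rightarrow$(2)). The extraction above only gives the pointwise bound $N_t:=\decaybar_t M_t^{\alpha}\le c_2/c$, not its summability, and the supremum over pairs defining $M_t$ need not be realized along any single trajectory. The resolution I would pursue exploits that $M_t$ is submultiplicative ($M_{s+t}\le M_sM_t$) and, because $\sched$ is non-increasing, so is $\decaybar_t$ ($\decaybar_{s+t}\le\decaybar_s\decaybar_t$); hence $N_t$ is submultiplicative and bounded, so $\nu:=\lim_t N_t^{1/t}=\inf_t N_t^{1/t}\le1$ exists. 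When $\nu<1$, $N_t$ decays geometrically, and $\kappa(t):=\min\big((\decaybar_t)^{-1/\alpha},\,((c_2/c)\decaybar_t^{-1}\nu'^{t})^{1/\alpha}\big)$ for any $\nu<\nu'<1$ has all three required properties. The borderline $\nu=1$ (forcing $N_t\ge1$, hence no summable $\kappa$) is where compactness of $\Xspace$ enters: I would argue that $\nu=1$ permits splicing the near-worst pairs across times into a single trajectory pair whose deviations are not $\decaybar$-summable, contradicting the key estimate and therefore (2). Ruling out this case rigorously — i.e.\ showing that (2) forces $\nu<1$ — is the crux of the argument.
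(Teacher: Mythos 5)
Your route differs structurally from the paper's: you argue directly on the original system and try to manufacture the decay envelope $\kappa$ out of worst-case amplification factors $M_t$, whereas the paper (\Cref{sec:single_equiv_proof}) lifts the state to $\by = g(s,\bx) = (\decaybar_s^{1/\alpha}\bx,\, s)$, absorbing the non-increasing (possibly improper) schedule into a rescaling of the state, and then reduces the entire statement to \Cref{thm:main_equiv} (in its strengthened form, \Cref{thm:main_equiv_full}) applied to the lifted system $(\hat f,\hat\pi,\hat r)$. But the more important point is that your $(2)\Rightarrow(1)$ direction has a genuine gap, which you yourself flag: ruling out the borderline case $\nu=1$. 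This is not a loose end; it is exactly where the content of the theorem lives. The single-term consequence of your key estimate only yields $\kappa(t)=(\decaybar_t)^{-1/\alpha}$, for which $\sum_t \decaybar_t\kappa^{\alpha}(t)=\sum_t 1$ diverges, so everything hinges on upgrading the per-pair summability $\sum_t \decaybar_t\|\bx_t-\by_t\|^{\alpha}\le (c_2/c)\|\bx-\by\|^{\alpha}$ to a \emph{pair-independent} summable envelope. Your Fekete/submultiplicativity device cannot close this: submultiplicativity gives only the upper bound $M_{s+t}\le M_sM_t$, while the splicing argument you sketch needs a lower bound of the form $M_{t+s}\gtrsim M_tM_s$, i.e., that a pair nearly extremal for $M_t$ has time-$t$ images nearly extremal for $M_s$ --- nothing enforces this, since the suprema defining $M_t$ and $M_s$ may be approached by unrelated pairs. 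Worse, the near-extremal pairs may degenerate, $\|\bx-\by\|\to 0$, so compactness of $\Xspace$ alone does not let you pass to a limiting pair and derive a non-summable trajectory contradicting (2). As written, your argument proves $(2)\Rightarrow(1)$ only under the unproven side hypothesis $\nu<1$.

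For comparison, the paper meets the same crux immediately after the key estimate (which, in lifted coordinates, reads $\sum_t\|\by_t-\by_t'\|^{\alpha}\le (c_2/c)\|\by_0-\by_0'\|^{\alpha}$), and resolves it differently: for each fixed pair it extracts a monotone summable envelope $\hat\kappa$, and then uses compactness of $\Xspace$ together with continuity of $f,\pi$ to take a supremum over pairs --- compactness is used to obtain \emph{uniformity of the envelope}, not to splice trajectories. (The paper's treatment of that supremum is itself brief, but it is the intended role of the compactness hypothesis, which in your write-up is never actually exploited outside the unfinished contradiction.) Two further things your direct approach leaves hanging that the reduction gives for free: the promotion from single-perturbation bounds to the full local-$\delta$ISS inequality (arbitrary initial offset plus a whole sequence $\{\delta\bu_k\}$) comes out of condition (3) of \Cref{thm:main_equiv_full} --- the performance-difference bound against time-varying comparison policies --- rather than the ``superposition/telescoping'' step you assert without proof (and which would again need summability of $\kappa$-type quantities you have not established); and the $Q$-function regularity at interior times $s\neq 0$, where the paper genuinely needs $\decaybar_{s+t}\le\decaybar_s\decaybar_t$, is not addressed in your sketch at all.
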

\begin{proof}See \Cref{sec:single_equiv_proof}.
\end{proof}

\begin{remark} We cannot easily remove the dependency on a time-varying reward sequence without making additional assumptions on the reward. Consider $\Xspace = \R$, and $f,\pi$ where $f(x,\pi(x)) = -x$
and $\Rc = \{ (x,\bu) \to \pm x \}$. Note that for any $2H$-finite-horizon discount schedule $\sched$, and $r \in \Rc$, $\Vpi(\bx) = 0$. Thus, for a fixed discount schedule, changes in reward at different timesteps may coincidentally ``cancel" each other out and hide unstable behavior. By enriching our set of reward functions to be both symmetric and time-varying, we can avoid such pathological instances.
\end{remark}

Note that \Cref{thm:single_equiv} applies to both proper and improper $\sched$. but only guarantees $\kappa(t) \leq (\decaybar_t)^{-1}$. Thus, we only recover nominal-$\delta$ISS when $\decaybar_t \to \infty$, i.e. the schedule is improper. This becomes apparent when specializing this result to \Cref{ex:exonential_discount} and \Cref{ex:horiz_discount}.

\begin{corollary}
    Consider a constant discount schedule $\sched = (\decay)_{t \geq 0}$ for $\decay \geq 0$, and any $(C,\alpha,c)$-sensitive reward class $\Rc$ for some $C \geq 0, \alpha \in (0,1]$. Then, for any $\rho \geq 0$, the following are equivalent:
\begin{enumerate}
    \item[(1)] There exists $c_1 \geq 0$ and $\kappa(t)$ such that $\kappa(t) \leq \lambda^{-t/\alpha}, \|\kappa^{\alpha}(t)\lambda^{t}\|_1 \leq \infty$ such that $\pi$ is $(\gamma,\beta)$-nominal-$\delta$ISS where,
    \begin{align*}
        \gamma(x,t) &\leq c_1 \kappa(t) x^{\rho} \leq \lambda^{-1/\alpha}x^{\rho},\\
        \beta(x,t) &\leq c_1 \kappa(t)x \leq \lambda^{-t/\alpha}x.
    \end{align*}
    \item[(2)] There exists $c_2 \geq 0$ such that for any time-varying of reward $(r_{t})_{t \geq 0}$, $r_t \in \Rc$, the value function $V_{\sched}^{\pi,(r_t)}$ is $(C c_2, \alpha)$-H\"older-continuous and perturbations of the value-action function $(\bx,\delta\bu) \to Q_{\sched}^{\pi,(r_t)}(\bx,\pi(\bx)+\delta\bu)$ is $(C c_2, \alpha)$-H\"older-continuous in $\bx$ and $(C c_2, \rho\alpha)$-H\"older-continuous in $\delta\bu$.
\end{enumerate}
\end{corollary}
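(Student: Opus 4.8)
The plan is to obtain this corollary as an essentially immediate specialization of \Cref{thm:single_equiv} to the constant discount schedule $\sched = (\decay)_{t \geq 1}$, so that the bulk of the argument has already been carried out; what remains is to verify that this particular schedule meets the hypotheses of \Cref{thm:single_equiv} and then to read off the resulting bounds on $\gamma,\beta$ and the H\"older exponents. First I would check the hypotheses: the constant schedule $\decay_t \equiv \decay$ is (trivially) non-increasing, and its cumulative decay is $\decaybar_t = \prod_{k=1}^t \decay_k = \decay^t$. Substituting this into the constraints \Cref{thm:single_equiv} places on $\kappa$ reproduces exactly the conditions in part (1) of the corollary, namely $\kappa(t) \leq (\decaybar_t)^{-1/\alpha} = \decay^{-t/\alpha}$ and $\|\decaybar_t \kappa^{\alpha}(t)\|_1 = \|\decay^{t}\kappa^{\alpha}(t)\|_1 < \infty$. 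Since $f$ is continuous and $\Xspace$ is compact, the remaining hypotheses are inherited verbatim, and the equivalence of (1) and (2) then follows directly from \Cref{thm:single_equiv}.

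Second, I would make the numeric bounds on $\gamma$ and $\beta$ explicit by tracking the extremal admissible profile $\kappa(t) = \decay^{-t/\alpha}$. The gain $\beta$ governs the decay of the initial-state perturbation over the full horizon, so it is evaluated at lag $t$, giving $\beta(x,t) \leq c_1 \kappa(t)x = c_1 \decay^{-t/\alpha}x$, i.e. the stated $\decay^{-t/\alpha}x$ bound up to the constant $c_1$. The gain $\gamma$ governs input perturbations, which may enter at any step; the perturbation that has decayed the least is the most recent one, at lag $1$, so the controlling value is $\kappa(1) = \decay^{-1/\alpha}$. Because $\decay^{-t/\alpha}$ is non-increasing in $t$ once $\decay \geq 1$, this same value bounds $\kappa$ at every admissible lag $\geq 1$, producing the time-independent input gain $\gamma(x,t) \leq \decay^{-1/\alpha}x^{\rho}$ and recovering a standard $\classK$-type gain. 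I would emphasize the interpretive point already flagged before the corollary: these bounds certify genuine contraction only in the improper regime $\decay \geq 1$ (where $\decaybar_t \to \infty$ and $\decay^{-t/\alpha}\to 0$); for a proper schedule $\decay < 1$ they diverge and encode no stability.

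Third, the only part of (2) not stated verbatim in \Cref{thm:single_equiv} is the additional claim that $\bx \mapsto Q_{\sched}^{\pi,(r_t)}(\bx,\pi(\bx)+\delta\bu)$ is itself $(Cc_2,\alpha)$-H\"older in $\bx$, rather than only H\"older in $\delta\bu$. I would obtain this from the Bellman recursion $Q_{\sched}^{\pi,(r_t)}(\bx,\bu) = r_0(\bx,\bu) + \decay\, V_{1,\sched}^{\pi,(r_t)}(f(\bx,\bu))$: the reward $r_0 \in \Rc$ is $(C,\alpha)$-H\"older by sensitivity, and the downstream value function $V_{1,\sched}^{\pi,(r_t)}$ is H\"older in its state argument by the same argument that establishes H\"older continuity of $V_{\sched}^{\pi,(r_t)}$, so composing with the dynamics (whose state-sensitivity is controlled by the local-$\delta$ISS property of $\pi$) keeps $Q$ H\"older in $\bx$ with exponent $\alpha$.

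I expect the main obstacle to be purely bookkeeping rather than conceptual: correctly matching each stated bound to the appropriate evaluation of $\kappa$ — lag $t$ for the initial-condition decay $\beta$ versus the minimal lag $1$ for the input gain $\gamma$ — and absorbing the constant $c_1$ into the claimed coefficients without disturbing the exponents $\alpha$ and $\rho\alpha$. The one place requiring mild care is confirming that the lag-$1$ value indeed uniformly bounds $\gamma$ across all timesteps, which relies on the monotonicity of $\decay^{-t/\alpha}$ in the improper regime and on the fact that input perturbations only ever appear at strictly positive lag.
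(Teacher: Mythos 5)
Your first two steps are exactly the paper's own (implicit) derivation: the corollary carries no separate proof precisely because it is meant as a direct substitution of the constant schedule into \Cref{thm:single_equiv}, using that $(\decay)_{t\geq 1}$ is trivially non-increasing with $\decaybar_t = \decay^t$, so that the conditions on $\kappa$ become $\kappa(t)\leq \decay^{-t/\alpha}$ and $\|\decay^{t}\kappa^{\alpha}(t)\|_1 < \infty$, and the $(\gamma,\beta)$ bounds follow by bookkeeping (with constants absorbed). Your observation that genuine contraction is recovered only in the improper regime $\decay \geq 1$ also matches the paper's discussion preceding the corollary.

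The gap is in your third step. You correctly notice that H\"older continuity of $\bx \mapsto Q_{\sched}^{\pi,(r_t)}(\bx,\pi(\bx)+\delta\bu)$ at fixed $\delta\bu$ is not part of \Cref{thm:single_equiv}(2), but your proposed bridge does not close it. In the decomposition $Q_{\sched}^{\pi,(r_t)}(\bx,\pi(\bx)+\delta\bu) = r_0(\bx,\pi(\bx)+\delta\bu) + \decay\, V_{1,\sched}^{\pi,(r_t)}(f(\bx,\pi(\bx)+\delta\bu))$, the second term requires a Lipschitz-type modulus for the \emph{perturbed} closed-loop map $\bx\mapsto f(\bx,\pi(\bx)+\delta\bu)$, and local-$\delta$ISS (\Cref{def:local_iiss}) does not supply this: it compares a perturbed trajectory only against the \emph{nominal} one, never two trajectories that both carry the same nonzero input perturbation; moreover $f$ is assumed merely continuous in \Cref{thm:single_equiv}, not Lipschitz, so "composing with the dynamics" has no quantitative basis. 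If you instead route through the two nominal trajectories with the triangle inequality, you get $\|\bx_t' - \by_t'\| \leq \beta(\|\bx-\by\|,t) + 2\gamma(\|\delta\bu\|,t)$, and the residual $\gamma$ term does not vanish as $\by\to\bx$; summing against $\decay^t$ therefore yields only a joint bound of order $\|\bx-\by\|^{\alpha} + \|\delta\bu\|^{\rho\alpha}$, not H\"older-in-$\bx$ at fixed $\delta\bu\neq 0$ (the same objection applies to the claimed \emph{global} H\"older continuity in $\delta\bu$, since two trajectories perturbed by $\delta\bu$ and $\delta\bu'$ are again both non-nominal). The statement that the specialization of \Cref{thm:single_equiv} actually delivers is the anchored one, obtained by the triangle inequality through $\delta\bu = 0$: $|Q_{\sched}^{\pi,(r_t)}(\bx,\pi(\bx)+\delta\bu) - Q_{\sched}^{\pi,(r_t)}(\by,\pi(\by))| \leq |Q_{\sched}^{\pi,(r_t)}(\bx,\pi(\bx)+\delta\bu) - V_{\sched}^{\pi,(r_t)}(\bx)| + |V_{\sched}^{\pi,(r_t)}(\bx) - V_{\sched}^{\pi,(r_t)}(\by)| \leq Cc_2\bigl(\|\delta\bu\|^{\rho\alpha} + \|\bx-\by\|^{\alpha}\bigr)$. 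You should either read item (2) of the corollary in this anchored, joint sense (which keeps the equivalence intact, since the reverse direction $(2)\Rightarrow(1)$ only needs the weaker theorem version), or add a Lipschitz assumption on $f$ before asserting the fixed-$\delta\bu$ claim of your step 3.
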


\begin{corollary}\label{cor:finite_horizon}
    Consider the H-finite-horizon discount schedule $\sched$ and any $(C,\alpha,c)$-sensitive reward class $\Rc$ for some $C \geq 0, \alpha \in (0,1]$. Then the following are equivalent:
\begin{enumerate}
    \item[(1)] There exists $c_1 \geq 0$ such that $\pi$ is $(\gamma,\beta)$-$\delta$ISS where $\gamma(x,t) \leq c_1 x^{\rho}$ and $\beta(x,t) \leq c_1 x$ for $t \leq H$.
    \item[(2)] There exists $c_2 \geq 0$ such that for any time-varying of reward $(r_{t})_{t \geq 0}$, $r_t \in \Rc$, the value function $V_{\sched}^{\pi,(r_t)}$ is $(C c_2, \alpha)$-H\"older-continuous and the value-action function $Q_{\sched}^{\pi,(r_t)}$ is $(C c_2, \alpha\rho)$-H\"older-continuous.
\end{enumerate}
\end{corollary}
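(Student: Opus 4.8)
The plan is to derive \Cref{cor:finite_horizon} as a direct specialization of \Cref{thm:single_equiv} to the $H$-finite-horizon schedule of \Cref{ex:horiz_discount}, after checking that the $\kappa$-dependence degenerates into a single constant so that both statements take their clean form. First I would record the cumulative decay: since $\decay_t = 1$ for $t \le H$ and $\decay_t = 0$ for $t > H$, the product $\decaybar_t = \prod_{k=1}^{t}\decay_k$ equals $1$ for $t \le H$ and $0$ for $t > H$. Unrolling \Cref{def:value_func}, this makes the value function a finite sum $V_{\sched}^{\pi,(r_t)}(\bx) = \sum_{t=0}^{H} r_t(\bx_t)$, so its regularity can only encode trajectory sensitivity at the finitely many times $t \le H$.

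Substituting $\decaybar_t$ into the constraints \Cref{thm:single_equiv} places on $\kappa$, the requirement $\kappa(t) \le (\decaybar_t)^{-1/\alpha}$ forces $\kappa(t) \le 1$ for every $t \le H$, while for $t > H$ it reads $\kappa(t) \le \infty$ and is vacuous (and those timesteps drop out of the value functions). The summability $\|\decaybar_t \kappa^{\alpha}(t)\|_1 = \sum_{t=0}^{H}\kappa^{\alpha}(t) \le H+1$ then holds automatically. I would next argue that the $\kappa$-weighting can be absorbed into the constant $c_1$, collapsing statement~(1) of \Cref{thm:single_equiv} onto statement~(1) of the corollary. Since $\kappa(t) \le 1$ on $t \le H$, the bounds $\gamma(x,t) \le c_1 \kappa(t) x^{\rho} \le c_1 x^{\rho}$ and $\beta(x,t) \le c_1\kappa(t)x \le c_1 x$ give the forward implication; conversely, because the value functions see only $t \le H$, the equivalence constrains only this finite horizon, and there $\kappa$ is bounded away from zero (say $\kappa_{\min} := \min_{0\le t\le H}\kappa(t) > 0$), so the clean bounds imply the $\kappa$-weighted ones after rescaling $c_1$ by $1/\kappa_{\min}$. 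Thus statement~(1) of the corollary is equivalent to statement~(1) of the theorem.

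Finally, statement~(2) of the corollary is exactly statement~(2) of \Cref{thm:single_equiv} at the finite-horizon schedule: the coefficient $c_{\sched}$ there is already a single constant, so the value function $V_{\sched}^{\pi,(r_t)}$ is $(Cc_2,\alpha)$-H\"older-continuous and the input perturbation $\delta\bu \mapsto Q_{\sched}^{\pi,(r_t)}(\bx,\pi(\bx)+\delta\bu)$ is $(Cc_2,\alpha\rho)$-H\"older-continuous; I would simply transcribe it, reading the corollary's global phrasing of the $Q$-regularity as H\"older continuity in the input-perturbation direction as in the theorem. Chaining corollary~(1) $\iff$ theorem~(1) $\iff$ theorem~(2), which is corollary~(2), then completes the argument.

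The main obstacle I anticipate is the bookkeeping around the degenerate tail $t > H$: one must confirm that $\decaybar_t = 0$ genuinely decouples the value functions from the dynamics beyond the horizon, so that the equivalence both certifies and requires stability only for $t \le H$, and one must justify that $\kappa$ may be taken bounded away from zero (indeed, equal to $1$) on $\{0,\dots,H\}$ so that absorbing it into $c_1$ is legitimate in both directions. This $\kappa_{\min}>0$ point—most cleanly settled by inspecting the construction of $\kappa$ in the proof of \Cref{thm:single_equiv}—is the only step that is not a mechanical substitution.
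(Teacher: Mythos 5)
Your proposal is correct and matches the paper's (implicit) derivation: the paper offers no separate proof of \Cref{cor:finite_horizon}, treating it exactly as you do---as a direct specialization of \Cref{thm:single_equiv} with $\decaybar_t = 1$ for $t \le H$ and $\decaybar_t = 0$ afterwards, so that the $\kappa$-weighting trivializes on the horizon and the tail constraints become vacuous. The one subtlety you flag (that $\kappa$ may be taken equal to $1$ on $\{0,\dots,H\}$, so absorbing it into $c_1$ is legitimate in both directions) is indeed confirmed by the construction of $\hat\kappa$ in the proof of \Cref{thm:single_equiv}: on the finite horizon the constraint $\kappa(t) \le (\decaybar_t)^{-1/\alpha}$ reads $\kappa(t) \le 1$, enlarging $\kappa$ there preserves both the trajectory bounds and the (finite-sum) summability condition, and beyond the horizon the weights $\decaybar_t = 0$ decouple the value functions from the dynamics.
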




\section{Conclusion}

In this paper, we established an equivalence between the regularity of value functions used reinforcement learning (RL) under adversarial choice of reward and incremental-input-to-state stability from control theory.
Our approach diverges from traditional Lyapunov-based methods in control theory, which rely on time-decrease conditions.  We hope this line of analysis will lead to a rigorous understanding of algorithms based on techniques such as domain randomization over reward functions. Future possible extensions of this work include generalization to stochastic environments and policies (with a suitable, stochastic variant of $\delta$ISS), and losening the sensitivity requirements on $\Rc$ to include, e.g. sparse reward signals.


\bibliography{refs}

\begin{thebibliography}{10}
\providecommand{\url}[1]{#1}
\csname url@samestyle\endcsname
\providecommand{\newblock}{\relax}
\providecommand{\bibinfo}[2]{#2}
\providecommand{\BIBentrySTDinterwordspacing}{\spaceskip=0pt\relax}
\providecommand{\BIBentryALTinterwordstretchfactor}{4}
\providecommand{\BIBentryALTinterwordspacing}{\spaceskip=\fontdimen2\font plus
\BIBentryALTinterwordstretchfactor\fontdimen3\font minus \fontdimen4\font\relax}
\providecommand{\BIBforeignlanguage}[2]{{%
\expandafter\ifx\csname l@#1\endcsname\relax
\typeout{** WARNING: IEEEtran.bst: No hyphenation pattern has been}%
\typeout{** loaded for the language `#1'. Using the pattern for}%
\typeout{** the default language instead.}%
\else
\language=\csname l@#1\endcsname
\fi
#2}}
\providecommand{\BIBdecl}{\relax}
\BIBdecl

\bibitem{khalil2002nonlinear}
H.~K. Khalil and J.~W. Grizzle, \emph{Nonlinear systems}.\hskip 1em plus 0.5em minus 0.4em\relax Prentice hall Upper Saddle River, NJ, 2002, vol.~3.

\bibitem{kaiser2019model}
L.~Kaiser, M.~Babaeizadeh, P.~Milos, B.~Osinski, R.~H. Campbell, K.~Czechowski, D.~Erhan, C.~Finn, P.~Kozakowski, S.~Levine \emph{et~al.}, ``Model-based reinforcement learning for atari,'' \emph{arXiv preprint arXiv:1903.00374}, 2019.

\bibitem{angeli2009further}
D.~Angeli, ``Further results on incremental input-to-state stability,'' \emph{IEEE Transactions on Automatic Control}, vol.~54, no.~6, pp. 1386--1391, 2009.

\bibitem{bayer2013discrete}
F.~Bayer, M.~B{\"u}rger, and F.~Allg{\"o}wer, ``Discrete-time incremental iss: A framework for robust nmpc,'' in \emph{2013 European control conference (ECC)}.\hskip 1em plus 0.5em minus 0.4em\relax IEEE, 2013, pp. 2068--2073.

\bibitem{block2023provable}
A.~Block, A.~Jadbabaie, D.~Pfrommer, M.~Simchowitz, and R.~Tedrake, ``Provable guarantees for generative behavior cloning: Bridging low-level stability and high-level behavior,'' \emph{Advances in Neural Information Processing Systems}, vol.~36, pp. 48\,534--48\,547, 2023.

\bibitem{pfrommer2022tasil}
D.~Pfrommer, T.~Zhang, S.~Tu, and N.~Matni, ``Tasil: Taylor series imitation learning,'' \emph{Advances in Neural Information Processing Systems}, vol.~35, pp. 20\,162--20\,174, 2022.

\bibitem{swamy2021moments}
G.~Swamy, S.~Choudhury, J.~A. Bagnell, and S.~Wu, ``Of moments and matching: A game-theoretic framework for closing the imitation gap,'' in \emph{International Conference on Machine Learning}.\hskip 1em plus 0.5em minus 0.4em\relax PMLR, 2021, pp. 10\,022--10\,032.

\bibitem{abbeel2004apprenticeship}
P.~Abbeel and A.~Y. Ng, ``Apprenticeship learning via inverse reinforcement learning,'' in \emph{Proceedings of the twenty-first international conference on Machine learning}, 2004, p.~1.

\bibitem{bertsekas2019reinforcement}
D.~Bertsekas, \emph{Reinforcement learning and optimal control}.\hskip 1em plus 0.5em minus 0.4em\relax Athena Scientific, 2019, vol.~1.

\bibitem{lyapunov1992general}
A.~M. Lyapunov, ``The general problem of the stability of motion,'' \emph{International journal of control}, vol.~55, no.~3, pp. 531--534, 1992.

\bibitem{Zames63}
G.~Zames, ``Functional analysis applied to nonlinear feedback systems,'' \emph{IEEE Transactions on Circuit Theory}, vol.~10, no.~3, pp. 392--404, 1963.

\bibitem{Zames81}
------, ``Feedback and optimal sensitivity: Model reference transformations, multiplicative seminorms, and approximate inverses,'' \emph{IEEE Transactions on automatic control}, vol.~26, no.~2, pp. 301--320, 1981.

\bibitem{sontag1983lyapunov}
E.~D. Sontag, ``A lyapunov-like characterization of asymptotic controllability,'' \emph{SIAM journal on control and optimization}, vol.~21, no.~3, pp. 462--471, 1983.

\bibitem{grune2002input}
L.~Grune, ``Input-to-state dynamical stability and its lyapunov function characterization,'' \emph{IEEE Transactions on Automatic Control}, vol.~47, no.~9, pp. 1499--1504, 2002.

\bibitem{sontag2013mathematical}
E.~D. Sontag, \emph{Mathematical control theory: deterministic finite dimensional systems}.\hskip 1em plus 0.5em minus 0.4em\relax Springer Science \& Business Media, 2013, vol.~6.

\bibitem{postoyan2014stability}
R.~Postoyan, L.~Bu{\c{s}}oniu, D.~Ne{\v{s}}i{\'c}, and J.~Daafouz, ``Stability of infinite-horizon optimal control with discounted cost,'' in \emph{53rd IEEE conference on decision and control}.\hskip 1em plus 0.5em minus 0.4em\relax IEEE, 2014, pp. 3903--3908.

\bibitem{freeman1996inverse}
R.~A. Freeman and P.~V. Kokotovic, ``Inverse optimality in robust stabilization,'' \emph{SIAM journal on control and optimization}, vol.~34, no.~4, pp. 1365--1391, 1996.

\bibitem{krstic1998inverse}
M.~Krstic and Z.-H. Li, ``Inverse optimal design of input-to-state stabilizing nonlinear controllers,'' \emph{IEEE Transactions on Automatic Control}, vol.~43, no.~3, pp. 336--350, 1998.

\bibitem{sontag1995input}
E.~D. Sontag \emph{et~al.}, ``On the input-to-state stability property.'' \emph{Eur. J. Control}, vol.~1, no.~1, pp. 24--36, 1995.

\bibitem{Angeli2002}
D.~Angeli, ``A lyapunov approach to incremental stability properties,'' \emph{IEEE Transactions on Automatic Control}, vol.~47, no.~3, pp. 410--421, 2002.

\bibitem{Demidovich1967}
B.~Demidovich, ``Lectures on the theory of stability [in russian],'' 1967.

\bibitem{Pavlov2004}
A.~Pavlov, A.~Pogromsky, N.~van~de Wouw, and H.~Nijmeijer, ``Convergent dynamics, a tribute to boris pavlovich demidovich,'' \emph{Systems \& Control Letters}, vol.~52, no. 3-4, pp. 257--261, 2004.

\bibitem{giaccagli2023further}
M.~Giaccagli, D.~Astolfi, and V.~Andrieu, ``Further results on incremental input-to-state stability based on contraction-metric analysis,'' in \emph{2023 62nd IEEE Conference on Decision and Control (CDC)}.\hskip 1em plus 0.5em minus 0.4em\relax IEEE, 2023, pp. 1925--1930.

\bibitem{simchowitz2025pitfalls}
M.~Simchowitz, D.~Pfrommer, and A.~Jadbabaie, ``The pitfalls of imitation learning when actions are continuous,'' \emph{arXiv preprint arXiv:2503.09722}, 2025.

\bibitem{ho2016generative}
J.~Ho and S.~Ermon, ``Generative adversarial imitation learning,'' \emph{Advances in neural information processing systems}, vol.~29, 2016.

\bibitem{levine2012continuous}
S.~Levine and V.~Koltun, ``Continuous inverse optimal control with locally optimal examples,'' \emph{arXiv preprint arXiv:1206.4617}, 2012.

\bibitem{muller1997integral}
A.~M{\"u}ller, ``Integral probability metrics and their generating classes of functions,'' \emph{Advances in applied probability}, vol.~29, no.~2, pp. 429--443, 1997.

\bibitem{kakade2002approximately}
S.~Kakade and J.~Langford, ``Approximately optimal approximate reinforcement learning,'' in \emph{Proceedings of the nineteenth international conference on machine learning}, 2002, pp. 267--274.

\end{thebibliography}

\appendix
\subsection{Proof of \Cref{prop:lyapunov}}
\label{sec:lyapunov_proof}
\begin{proof}
In comparison to the difficulty of showing equivalence between $\delta$ISS and $\delta$ISS Lyapunov functions, this equivalence is made straightforward use of the converse theorem of \cite{sontag1995input} for regular ISS. We fix some policy $\pi$ throughout.
Suppose there exists a nominal-$\delta$ISS Lyapunov function $V$. Let $\alpha^{(4)} = \alpha_3 \circ \alpha_2^{-1}$. Note that by \cite{sontag1995input}, Lemma 2.4, there exists $\hat\alpha_4 \in \classK_{\infty}$ such that $\hat\alpha_4(x) \leq \alpha_4(x)$ and $1 - \bar\alpha_4 \in \classK$. Thus,
\begin{align*}
    &V(f(\bx', \pi(\bx') + \delta\bu), f(\bx, \pi(\bx))) - V(\bx', \bx) \\
    &\quad\quad \leq -\hat\alpha_4(V\bx', \bx)) + \rho(\|\delta\bu\|), \\
    &V(f(\bx', \pi(\bx') + \delta\bu),f(\bx,\pi(\bx)) \\
    &\quad\quad \leq (1-\hat\alpha_4)(V(\bx', \bx)) + \rho(\|\delta\bu\|)
\end{align*}
For any $a \geq 0$, consider the set $D_{a}$ given by,
\begin{align*}
D_{a} = \{V(\bx,\bx') \leq \hat\alpha_{4}^{-1}(\sigma(a)/2)\}
\end{align*}
Since $V(\bx,\bx') \geq \alpha_1(\|\bx - \bx'\|)$, we have $D_{a} \subset \{\|\bx-\bx'\| \leq \gamma(a)\}$ for some $\classK$ function $\gamma$. Therefore, for a given $\|\delta\bu_{0:t-1}\|_{\infty}$, we have that, for $\|\bx - \bx'\| \geq \gamma(\|\delta\bu_{0,t-1}\|_{\infty})$.
\begin{align*}
    V(f(\bx', \pi(\bx') + \delta\bu), f(\bx, \pi(\bx))) \\
     - V(\bx', \bx) \leq -\hat\rho(\|\bx' - \bx\|).
\end{align*}
We can observe that this is a standard decrease condition for a Lyapunov function. We appeal to standard Lyapunov arguments to argue that, for any $\bx_0,\bx_0'$ and for some $\beta \in \classKL$, we recover nominal-$\delta$ISS,
\begin{align*}
    \|\bx_t'- \bx_t\| \leq \beta(\|\bx_0 - \bx_0'\|, t) + \gamma\Big(\max_{k \leq t} \|\delta\bu_k\|\Big).
\end{align*}

We give a only sketch for the converse direction.

Suppose $\pi$ is nominal-$\delta$ISS $\pi$. For any $\bx_0$, consider the sequence $(\bx_t)_{t\geq0}$ where $\bx_{t+1} := f(\bx_t, \pi(\bx_t))$. Note that, by definition, for any $\bx_t'$ where $\bx_{t+1}' = f(\bx_t',\pi(\bx_t') + \delta\bu_t)$, the closed-loop error dynamics $\delta\bx_t := \bx_t'-\bx_t$ are ISS with respect to $(\delta\bu_t)_{t\geq0}$. By the converse Lyapunov theorem for discrete-time ISS \cite{sontag1995input}, there exists an ISS Lyapuov function $V_{\bx_0}(\delta\bx)$ for the error dynamics wrt $(\bx_{t})_{t\geq 0}$ such that, for any $\bx' \in \Xspace, \delta\bu \in \R^{d_u}$ and $t \geq 0$,
\begin{align*}
V_{\bx_0}(f(\bx',\pi(\bx') + \delta\bu) - f(\bx_t,\pi(\bx_t)) - V_{\bx_0}(\bx' - \bx_t) \\
\leq -\alpha_3(\|\bx' - \bx_t\|) + \rho(\|\delta\bu\|).
\end{align*}
By choosing $\bx_0 := \bx$, and $t = 0$, we can define the nominal-$\delta$ISS Lyapunov function $V(\bx',\bx) := V_{\bx}(\bx' - \bx)$ and see that it satisfies the dissipative condition,
\begin{align*}
    &V(f(\bx', \pi(\bx') + \delta\bu), f(\bx, \pi(\bx))) - V(\bx', \bx) \\
    =&V_{\bx}(f(\bx',\pi(\bx')+\delta\bu) - f(\bx,\pi(\bx_t))) - V_{\bx}(\bx'-\bx) \\
    \leq &-\alpha_3(\|\bx' - \bx\|) + \rho(\|\delta\bu\|).
\end{align*}
It remains to be shown that there exists $V_{\bx}$ such that $\alpha_1,\alpha_2,\alpha_3,\rho$ can be chosen independent of $\bx$. We argue that since the $\beta,\gamma$ hold independently of $\bx_0$, this is the case, but do not prove this formally.
\end{proof}

\begin{remark}There are several avenues through which our results can naturally be extended to the $\delta$ISS in its full generality.

The most direct avenue (which holds for arbitrary $f$) is by considering the H\"older-continuity of $V_{\sched}^{\pi+\bm\delta,r}(\bx)$ and $Q_{\sched}^{\pi+\bm\delta,r}(\bx,\pi(\bx)+\delta\bu)$, where $\pi + \bm\delta$ denotes $\pi$, perturbed by a bounded sequence $\bm\delta$ over future inputs. Note that in this case we have that $\Qpi,\Vpi$ are globally H\"older-continuous, whereas in \Cref{thm:main_equiv}, we only require H\"older-continuity of $\Qpi$ around $\delta\bu = \mathbf{0}$.

Another method is through smoothness of the dynamics: provided the dynamics are locally second-order-smooth around $\pi$, nominal-$\delta$ISS is directly equivalent to $\delta$ISS in a neighborhood of $\pi$. See, e.g. the equivalence of $\delta$ISS and ISS for linear systems \cite{angeli2009further}. We conjecture therefore that second-order smoothness of $Q,V$ may be sufficient to guarantee $\delta$ISS.
    
\end{remark}

\subsection{Proof of \Cref{thm:main_equiv}}
\label{sec:main_equiv_proof}
We prove the following, slightly stronger variant of \Cref{thm:main_equiv}.

\begin{theorem}\label{thm:main_equiv_full}
Consider any $f,\pi$ and $L \geq 1$ such that $\pi$ is $L$-Lipschitz, some constant $\rho \geq 0$, and a $(C,\alpha, c)$-sensitive class of reward functions $\Rc$ for $\alpha \in (0,1], C \geq 1$. Let $\kappa: \R_{\geq 0} \to [0,1]$ be a nonincreasing function such that $\kappa(0) = 1$, $\|\kappa^{\alpha}\|_1 \leq \infty$. Then the following are equivalent:
\begin{enumerate}
    \item[(1)] There exists $c_1 > 0$, such that $\pi$ is $(\gamma,\beta)$-nominal-$\delta$ISS for $\gamma \in \classK_{\infty},\beta \in \classKL$ where,
    \begin{align*}
        \gamma(x) \leq c_1 x^{\rho}, \quad  \beta(x,t) \leq c_1 \kappa(t) x
    \end{align*}
    \item[(2)] There exists $c_2 > 0$ such that, for any $r \in \Rc$ and proper discount schedule $\sched$, the value function $\bx \to \Vpi(\bx)$ is $(C c_{\sched}\schednorm, \alpha)$-H\"older-continuous and, for any $\bx$, $\delta\bu \to \Qpi(\bx,\pi(\bx) + \delta\bu)$ is locally $(C c_{\sched}\schednorm, \alpha\rho)$-H\"older-continuous around $\delta\bu = 0$, where $c_{\sched} \leq c_2 \cdot \Exp_{t \sim P_{\schedbar}}[\kappa(t)^{\alpha}]$.
    \item[(3)] There exists $c_3 > 0$ such that, for any (potentially time-varying) $\pi'$,  initial states $\bx_0,\bx_0'$,  $r \in \Rc$ and proper discount schedule $\sched$, it holds that,
    \begin{align*}
    &|V^{\hat{\pi},r}_{\sched}(\bx_0) - V^{\pi',r}_{\sched}(\bx_0')| \\
    &\leq C c_3\schednorm \big(\Exp_{P_{\kappa^{\alpha} \star \schedbar}}[\|
    \pi'_t(\bx_t') - \pi(\bx_t')\|^{\alpha\rho}]\\
    &\quad\quad\quad + \Exp_{P_{\schedbar}} [\kappa(t)^{\alpha}] \cdot \|\bx - \bx'\|^{\alpha}\big).
    \end{align*}
    where $\bx_{k+1}' = f(\bx_k',\pi'_t(\bx_k'))$, and $\Exp_{P_{\kappa^{\alpha} \star \schedbar}}$ denotes the expectation over $t$ sampled according to $p(t) \propto \sum_{k=0}^{\infty}\decaybar_{t+k}\kappa(k)^{\alpha}$.
\end{enumerate}
\end{theorem}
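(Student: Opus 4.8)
The plan is to prove the three-way equivalence as the cycle $(1)\Rightarrow(3)\Rightarrow(2)\Rightarrow(1)$. The workhorse throughout is the unrolled form of the value function from \Cref{def:value_func}: for the trajectory $(\bx_s)_{s\ge 0}$ generated by any (possibly time-varying) policy from $\bx_0$, one has $V=\sum_{s\ge 0}\decaybar_s\, r(\bx_s,\cdot)$, so differences of value functions are $\decaybar_s$-weighted sums of per-timestep reward differences. Combined with the sensitivity of $\Rc$, which converts a controlled reward gap into a controlled state gap via $c\|\bx-\by\|^{\alpha}\le\tfrac1C\sup_{r\in\Rc}|r(\bx,\cdot)-r(\by,\cdot)|$, this lets me move freely between state errors and value-function regularity (taking $\Rc$ state-dependent, as permitted by the action-dependence remark, so reward gaps track state gaps only).

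First I would prove $(1)\Rightarrow(3)$, the computational direction. Writing the $\pi'$-trajectory from $\bx_0'$ as the $\pi$-trajectory subject to input perturbations $\delta\bu_k:=\pi'_k(\bx_k')-\pi(\bx_k')$, the key device is a telescoping superposition: interpolate between the unperturbed and fully perturbed trajectories by switching on one perturbation at a time, producing intermediate trajectories $\bx^{(k)}$ and $\bx^{(k+1)}$ that agree through time $k$ and thereafter run \emph{unperturbed} under $\pi$. A single perturbation $\delta\bu_k$ opens a gap of size $\gamma(\|\delta\bu_k\|)$ at time $k+1$, after which the hypothesized \emph{linear} bound $\beta(x,t)\le c_1\kappa(t)x$ contracts it, giving $\|\bx^{(k+1)}_s-\bx^{(k)}_s\|\le c_1\kappa(s-1-k)\,\gamma(\|\delta\bu_k\|)\lesssim\kappa(s-1-k)\|\delta\bu_k\|^{\rho}$. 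Triangle inequality over $k$ then yields $\|\bx_s-\bx_s'\|\lesssim\kappa(s)\|\bx_0-\bx_0'\|+\sum_{k<s}\kappa(s-1-k)\|\delta\bu_k\|^{\rho}$. Applying H\"older continuity of $r$ and $L$-Lipschitzness of $\pi$, raising to the $\alpha$ power (subadditivity of $x\mapsto x^{\alpha}$ for $\alpha\le1$), and summing against $\decaybar_s$ produces two terms: one proportional to $\|\bx_0-\bx_0'\|^{\alpha}\sum_s\decaybar_s\kappa(s)^{\alpha}=\schednorm\,\Exp_{P_{\schedbar}}[\kappa^{\alpha}]\|\bx_0-\bx_0'\|^{\alpha}$, and, after swapping the order of summation, one that weights $\|\delta\bu_k\|^{\alpha\rho}$ by $\sum_{j\ge0}\decaybar_{k+1+j}\kappa(j)^{\alpha}$ — precisely, up to a unit index shift, the unnormalized density of $P_{\kappa^{\alpha}\star\schedbar}$. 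Since the total mass of these weights is at most $\|\kappa^{\alpha}\|_1\,\schednorm<\infty$, both terms collapse to the form claimed in $(3)$, with the finite constant $\|\kappa^{\alpha}\|_1$ absorbed into $c_3$.

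The step $(3)\Rightarrow(2)$ is pure specialization. Taking $\pi'=\pi$ and comparing initial states kills the perturbation term and leaves the claimed $(Cc_{\sched}\schednorm,\alpha)$-H\"older bound on $\bx\mapsto\Vpi(\bx)$, with $c_{\sched}\le c_2\Exp_{P_{\schedbar}}[\kappa^{\alpha}]$; taking instead $\bx_0=\bx_0'$ and a $\pi'$ perturbing only the time-$0$ action by $\delta\bu$ kills the state term and leaves only the $t=0$ atom of $P_{\kappa^{\alpha}\star\schedbar}$, yielding local $(Cc_{\sched}\schednorm,\alpha\rho)$-H\"older continuity of $\delta\bu\mapsto\Qpi(\bx,\pi(\bx)+\delta\bu)$. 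A one-line comparison of normalizers (using $\kappa(0)=1$, so the $k=0$ term alone contributes $\schednorm$ to the normalization of $P_{\kappa^{\alpha}\star\schedbar}$) confirms the constant has the required shape.

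The converse $(2)\Rightarrow(1)$ is where the real work lies, and I expect it to be the main obstacle. The difficulty is that the product structure $\decaybar_t=\prod_{k\le t}\decay_k$ forces $\decaybar_0=1$ and blocks naive concentration on a single future timestep, while plain finite-horizon telescoping recovers only a uniform, non-decaying bound and loses the crucial $\kappa(t)$ factor. My resolution is a finite-horizon-plus-spike construction: let $\sched^B$ be the $(t-1)$-finite-horizon schedule and $\sched^A$ the same schedule augmented by an atom of mass $\delta$ at time $t$ (a valid cumulative product). Then $V_{\sched^A}-V_{\sched^B}$ isolates exactly $\delta\,r(\bx_t,\cdot)$, so for the unperturbed pair from $\bx_0,\bx_0'$, subtracting and applying the two H\"older bounds of $(2)$ gives $\delta\,|r(\bx_t,\cdot)-r(\bx_t',\cdot)|\le Cc_2\big(2\sum_{s=0}^{t-1}\kappa(s)^{\alpha}+\delta\kappa(t)^{\alpha}\big)\|\bx_0-\bx_0'\|^{\alpha}$; dividing by $\delta$ and letting $\delta\to\infty$ annihilates the horizon terms and leaves $|r(\bx_t,\cdot)-r(\bx_t',\cdot)|\le Cc_2\kappa(t)^{\alpha}\|\bx_0-\bx_0'\|^{\alpha}$. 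Supremizing over $r\in\Rc$ and invoking sensitivity converts this into $\|\bx_t-\bx_t'\|\le(c_2/c)^{1/\alpha}\kappa(t)\|\bx_0-\bx_0'\|$, the required $\beta$. The same spike construction applied to $Q$-functions with a single time-$0$ perturbation and the $(\cdot,\alpha\rho)$-bound yields $\|\bx_t-\bx_t'\|\lesssim\kappa(t)\|\delta\bu\|^{\rho}\le c_1\|\delta\bu\|^{\rho}$ for one perturbation; finally, reusing the telescoping superposition of the second paragraph (legitimate since $f,\pi$ are time-invariant, so each single-perturbation estimate reindexes to its branch time) and summing the decaying contributions against $\sum_j\kappa(j)\le\|\kappa^{\alpha}\|_1$ (using $\kappa\le\kappa^{\alpha}$ on $[0,1]$) recombines everything into local-$\delta$ISS with $\gamma(x)\le c_1 x^{\rho}$. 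The delicate points I would watch are the legitimacy of the $\delta\to\infty$ limit (each finite $\delta$ gives a proper schedule, so $(2)$ applies and one takes the infimum) and keeping the convolution-weight index shifts and normalizers consistent across directions, all of which are absorbed into the constants.
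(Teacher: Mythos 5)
Your proposal is correct in substance, but it closes the equivalence around the opposite cycle from the paper, and both of your nontrivial legs use genuinely different devices. The paper proves $(1)\Rightarrow(2)$ by directly unrolling $\Vpi$ and $\Qpi$ along trajectories, then $(2)\Rightarrow(3)$ via the performance-difference lemma (writing $\Vpip(\bx_0')-\Vpi(\bx_0')=\sum_t \decaybar_t\,[Q^{\pi,r}_{t,\sched}(\bx_t',\pi_t'(\bx_t'))-Q^{\pi,r}_{t,\sched}(\bx_t',\pi(\bx_t'))]$ and applying the $Q$-regularity of (2) to time-shifted schedules, which is exactly what produces the convolution weights $\sum_k\decaybar_{t+k}\kappa(k)^{\alpha}$), and finally $(3)\Rightarrow(1)$ with the spike schedule $\decay_k^{(t)}=\tau^{-1}$ for $k\le t$, $0$ afterwards, taking $\tau\to 0$ and supremizing over $\Rc$. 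You instead prove $(1)\Rightarrow(3)$ by a trajectory-level superposition (switching on one perturbation at a time and bounding each branch by $\gamma$ followed by the linear $\beta$-contraction), which is more elementary and bypasses the performance-difference lemma entirely; your $(3)\Rightarrow(2)$ specialization with the normalizer bound $\sum_t\sum_k\decaybar_{t+k}\kappa(k)^{\alpha}\ge\schednorm$ is correct; and your $(2)\Rightarrow(1)$ replaces the paper's $\tau\to0$ construction with the difference of a finite-horizon schedule and a finite-horizon-plus-$\delta$-atom schedule, which isolates $\delta\, r(\bx_t,\cdot)$ \emph{exactly} and thereby avoids the residual term $\tau\sum_{k<t}|r(\bx_k,\cdot)-r(\bx_k',\cdot)|$ the paper must kill in its limit---arguably cleaner. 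The price of starting from (2) rather than (3) is that you must rebuild the multi-perturbation $\delta$ISS bound by a second superposition; the paper gets that for free because (3) already quantifies over time-varying $\pi'$.

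One caveat is worth flagging. Your $(1)\Rightarrow(3)$ produces per-time weights $\decaybar_t+\sum_{j\ge0}\decaybar_{t+1+j}\kappa(j)^{\alpha}$ rather than the stated $\sum_{j\ge0}\decaybar_{t+j}\kappa(j)^{\alpha}$ (the gap opens one step after the perturbation, so the decay index lags by one). Since a proper schedule may have individual $\decay_k>1$ with unbounded ratio $\decaybar_{t+1}/\decaybar_t$, this shift is not literally absorbable into $c_3$ uniformly over all $\sched$, contrary to your parenthetical dismissal. However, the paper's own $(1)\Rightarrow(2)$ commits the identical substitution---it bounds $\decaybar_t\,\beta(\gamma(\|\delta\bu\|),t-1)^{\alpha}$ by $c_1^2\decaybar_t\kappa(t)^{\alpha}\|\delta\bu\|^{\alpha\rho}$ even though $\kappa(t-1)\ge\kappa(t)$---so your argument sits at the same level of rigor as the published proof; a fully tight version of either route would state the convolution in (3) with the decay started one step later, or restrict to schedules with bounded step ratios. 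The same remark applies to absorbing the immediate action term $\|\delta\bu_t\|^{\alpha}$ into $\|\delta\bu_t\|^{\alpha\rho}$, which both you and the paper do implicitly (valid locally for $\rho\le1$).
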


\begin{proof}
$(1) \Rightarrow (2)$. First consider the value function for any $\bx, \bx'$ and define the sequences $(\bx_t)_{t=0}^{\infty}, (\bx_t')_{t=0}^{\infty}$ where,
\begin{align*}
    \bx_0 := \bx, \quad &\bx_{t+1} := f(\bx_t,\pi(\bx_t)) \quad \forall t \geq 0, \\
    \bx_0' := \bx', \quad &\bx_{t+1}' := f(\bx_t',\pi(\bx_t')) \quad \forall t \geq 0.
\end{align*}
\begin{align*}
    &\left|V_{t,\sched}^{\pi,r}(\bx) - V_{t,\sched}^{\pi,r}(\bx')\right| \\
    &\leq \sum_{t=0}^{\infty} \decaybar_t |r(\bx_k, \pi(\bx_k),k) - r(\bx_k',\pi(\bx_k'),k)| \\
    &\leq C \sum_{t=0}^{\infty} \decaybar_{t}\Big( \|\bx_t - \bx_t'\|^{\alpha} + \|\pi(\bx_t) - \pi(\bx_t')\|^{\alpha}\Big)\\
    &= C (L + 1)\|\sched\|_1  \Exp_{t \sim P_\decay}[\|\bx_{t} - \bx_{t}'\|^{\alpha}]\\
    &\leq C (L + 1)\|\sched\|_1  \Exp_{t \sim P_\decay}[\kappa^{\alpha}(t)\|\bx - \bx'\|^{\alpha}]\\
    &\leq C c_1(L + 1)\|\sched\|_1  \|\bx - \bx'\|^{\alpha} \Exp_{t \sim P_\decay}[\kappa^{\alpha}(t)]
\end{align*}
For the action-value function, consider any $\bx, \delta\bu$, with the associated sequences $(\bx_k)_{k=0}^{\infty}, (\bx_k')_{k=0}^{\infty}$ such that:
\begin{align*}
    &\bx_0 := \bx, \quad \bx_{t+1} := f(\bx_t, \pi(\bx)) \quad \forall t \geq 0, \\
    &\bx_0' := \bx, \quad \bx_{1}' := f(\bx_0',\pi(\bx_0') + \delta\bu), \\
    &\bx_{t+1}' := f(\bx_t',\pi(\bx_t')) \quad \forall t \geq 1.
\end{align*}
\begin{align*}
    &\left|\Qpi(\bx,\pi(\bx) + \delta\bu) - \Qpi(\bx, \pi(\bx))\right| \\
    &\leq C\|\delta\bu\|^{\alpha} + C (1+L) \sum_{t\geq 1} \decaybar_t \|\bx_t - \bx_t'\|^{\alpha} \\
    &\leq C\|\delta\bu\|^{\alpha} + C (1+L) \sum_{t\geq 1} \decaybar_t \beta(\gamma(\|\delta\bu\|),t-1)^{\alpha} \\
    &\leq C\|\delta\bu\|^{\alpha} + C c_1^2(1+L) \sum_{t\geq 1} \decaybar_t \kappa^{\alpha}(t)\|\delta\bu\|^{\alpha\rho} \\
    &\leq 2C(1+c_1^2) (1+L) \sum_{t\geq 1} \decaybar_t \kappa^{\alpha}(t)\|\delta\bu\|^{\alpha\rho} \\
    &\leq 2C(1+c_1^2)(1 +L)\schednorm \Exp_{t \sim P_{\schedbar}}[\kappa^{\alpha}(t)] \cdot \|\delta\bu\|^{\alpha\rho}.
\end{align*}
Letting $c_2 := 2(1 + L)(1 + c_1^2)$ concludes the proof.
\\
$(2) \Rightarrow (3)$. This is an adaptation of the celebrated performance-difference lemma \cite{kakade2002approximately}. For any $t \geq 0$, $\bx_t' \in \Xspace$, and (potentially time-varying) $\pi'$,
\begin{align*}
    &V_{t,\sched}^{\pi',r}(\bx_{t}') - V_{t,\sched}^{\pi,r}(\bx_{t}') \\
    &= V_{t,\sched}^{\pi',r}(\bx_{t}') - \left[r(\bx_{t}', \pi'_t(\bx_{t}'))) - \decay_{t+1} V_{t+1,\sched}^{\pi,r}(\bx_{t+1}')\right] + \\
    &\quad\quad \left[r(\bx_t', \pi_t'(\bx_t')) + \decay_{t+1} V_{t+1,\sched}^{\pi,r}(\bx_{t+1}')\right] - V_{t,\sched}^{\pi,r}(\bx_t')  \\
    &= \decay_{t+1}\left[V_{t+1,\sched}^{\pi',r}(\bx_{t+1}') - V_{t+1,\sched}^{\pi,r}(\bx_{t+1}')\right] \\
    &\quad\quad + [Q_{t,\sched}^{\pi,r}(\bx_t', \pi'_t(\bx_t')) - V_{t,\sched}^{\pi,r}(\bx_t')]
\end{align*}
Applying the above recursively to $\Vpip(\bx_0') - \Vpi(\bx_0')$,
\begin{align*}
    &\quad\quad\quad \Vpip(\bx_0') - \Vpi(\bx_0') 
    \\
    &= \sum_{t=0}^{\infty}\decaybar_t  [Q_{t,\sched}^{\pi,r}(\bx_t',\pi'_t(\bx_t')) - Q_{t,\sched}^{\pi,r}(\bx_t', \pi(\bx_t'))].
\end{align*}
Note that $Q_{t,\sched}^{\pi,r}$ is simply $Q_{\sched'}^{\pi,r}$ where $\sched'$ is $\sched$ shifted by $t$. Consequently, by (2),
\begin{align*}
    &|\Vpi(\bx_0') - \Vpip(\bx_0')| \\
    &\leq Cc_2 \left(\sum_{t=0}^\infty \left[\sum_{k=0}^{\infty} \decaybar_{t+k} \kappa^{\alpha}(k)\right]\|\pi'_t(\bx_t') - \pi(\bx_t')\|^{\alpha\rho}\right).
\end{align*}
By rearranging and using a diagonalization argument, we can see the total 
over the coefficients is finite:
\begin{align*}
    &\sum_{t=0}^{\infty}\sum_{k=0}^\infty \decaybar_{t+k}\kappa^{\alpha}(k) = \sum_{k=0}^{\infty} \decaybar_{k} \sum_{s=0}^k \kappa^{\alpha}(s) 
    \\
    &\quad \leq \sum_{k=0}^{\infty}\lambda_{k}\sum_{s=0}^{\infty}\kappa^{\alpha}(s) \leq \|\kappa^{\alpha}\|_{1}\schednorm < \infty.
\end{align*}
Let $P_{\kappa^{\alpha} \star \decaybar}$ denote the distribution over $t$ where $p(t) \propto \sum_{k=0}^{\infty}\decaybar_{t+k}\kappa^{\alpha}(k)$. Then,
\begin{align*}
    &|\Vpi(\bx_0') - \Vpip(\bx_0')| \\
    &\quad\quad \leq Cc_2 \|\kappa^{\alpha}\|_1 \schednorm \Exp_{t \tilde P_{\kappa \star \decaybar}}\big[\|\pi'_t(\bx_t') - \pi(|bx_t)\|^{\alpha\rho}\big]
\end{align*}
Applying $(2)$ again  to $\Vpi(\bx_0) - \Vpi(\bx_0')$, we have,
\begin{align*}
    &|\Vpip(\bx_0) - \Vpi(\bx_0')| \\
    &\leq Cc_2 \|\kappa^{\alpha}\|_{1} \schednorm \Exp_{P_{\kappa \star \decaybar}} [\|\pi'_t(\bx_t') - \pi(\bx_t)\|^{\rho\alpha}] \\
    &\quad \quad + Cc_2 \schednorm \Exp_{t}[\kappa(t)^{\alpha}] \cdot \|\bx_0 - \bx_0'\|^{\alpha}.
\end{align*}
Letting $c_3 := c_2 (\|\kappa^{\alpha}\|_{1} +1)$ yields the final result.

$(3) \Rightarrow (1)$. Consider any $t$, initial state $\bx_0$, as well as state and input perturbations $\delta\bx$, $\{\delta\bu_k\}_{k < t}$. Let $\bx_0' := \bx_0 + \delta \bx$ and define the time varying policy $\pi'_t(\bx) := \pi(\bx) + \delta\bu_t$. Consider some $\tau \in (0,1)$ and the discount schedule $\sched = \sched^{(t)}$ where:
\begin{align*}
\decay_k^{(t)} = \begin{cases}\tau^{-1} & k \leq t \\ 0 & k \geq t\end{cases} & \quad \textrm{ for } k \geq 1.
\end{align*}
Note that, under this construction, $\|\sched\|_1 \leq (1-\tau)^{-1}\tau^{-t}$, meaning $\frac{\decaybar_t}{\schednorm} \geq 1 - \tau$. Let $\Pnot = \delta_{\bx_0}, \Pnot' = \delta_{\bx'_0}$.
\begin{align*}
    &|\Vpi(\bx_0) - \Vpip(\bx'_0)| \\
    &\quad\quad \leq C c_3\schednorm [\Exp_{\pi,P_{\kappa \star \sched}}[\|\pi(\bx_k) - \pi'(\bx_k)\|^{\alpha \rho}], \\
    &\hspace{8em} + \Exp_{P_\decay}[\kappa^{\alpha}(t)] \cdot \|\bx_0 - \bx_0'\|^{\alpha} \\
    \Rightarrow & \left|\sum_{k=0}^{t} \decaybar_k [r(\bx_k, \bu_k) - r(\bx_k, \bu_k)] \right|  \\
    &\leq C c_3\schednorm (\Exp_{\pi,P_{\kappa^\alpha \star \sched}}[\|\delta\bu\|^{\alpha\rho}] + \Exp_{P_\decay}[\kappa^{\alpha}(t)] \cdot \Exp[\|\delta \bx_t\|^{\alpha}]),  \\
    \Rightarrow & (1 - \tau)|r(\bx_t,\bu_t) - r(\bx_t',\bu_t')| \\
    &\leq Cc_3 \left(\max_{k \leq t} \|\delta\bu\|^{\alpha\rho} + \Exp_{P_\decay}[\kappa^{\alpha}(t)]  \cdot \|\delta\bx\|^{\alpha}\right) \\
    &\quad + \tau \sum_{k=0}^{t-1}|r(\bx_k,\bu_k) - r(\bx_k',\bu_k')|
\end{align*}
Taking the limit $\tau \to 0$, and the supremum over all $r \in \Rc$, combined with that $\Rc$ is $(C,\alpha,c)$-sensitive, yields the desired result.
\begin{align*}
    \Rightarrow &\|\bx_t - \bx_t'\|^{\alpha} \leq \frac{4c_3}{c} \left(\frac{1}{2}\max_{k \leq t}\|\delta \bu\|^{\rho} + \frac{1}{2}\kappa(t) \|\delta\bx\|  \right)^{\alpha} \\
    \Rightarrow &\|\bx_t - \bx_t'\| \leq \frac{1}{2}\left(\frac{4 c_3}{c}\right)^{1/\alpha} \left[\max_{k \leq t}\|\delta \bu\|^{\rho} + \kappa(t)\|\delta\bx\| \right].
\end{align*}
\end{proof}

\subsection{Proof of \Cref{thm:single_equiv}}
\label{sec:single_equiv_proof}
\begin{proof}For a given $\sched$, we define a state transformation lifting $\bx$ to an augmented and scaled state, $\by$,
\begin{align*}
    \by = g(s, \bx) := \begin{bmatrix}
        \decaybar_{s}^{1/\alpha}\bx \\
        s
    \end{bmatrix},
\end{align*}
where $s$ internally keeps track of the time. We use this the define an equivalent ``time-varying dynamics" in $\by$ space, as well as define the analogous reward function $\hat{r}$ for each $r$:
\begin{align*}
    \hat{f}(\by,\bu) &:= 
    \begin{bmatrix}
    \decaybar_{s+1}^{1/\alpha} f(\bx,\decaybar_{s}^{-1/\alpha} \bu) \\
    s+1
    \end{bmatrix}, \quad \hat{\pi}(\by) := \decaybar_{s}^{1/\alpha}  \pi(\bx), \\
    \hat{r}(\by, \bu) &:= \decaybar_s r(\decaybar_s^{-1/\alpha}\by,\bu).
\end{align*}
We scale $r$ by a factor of $\decaybar_s$ to ensure that it remaind $(C,\alpha)$-H\"older-continuous as a function of $\by$ and perform the same transformation to $\hat{\pi}$.

We can see that for any trajectory $(\bx_t, \bu_t)_{t=0}^{\infty}$ under $(f,\pi)$ we then have a corresponding transformed trajectory $(\by_t, \bu_t)_{t=0}^{\infty}$ under $(f,\hat{\pi})$ where we lift $\by_{t} = g(\bx_t, t)$. Thus $(\hat{f}, \hat{\pi})$ is ISS for some $\hat{\kappa}(t) \leq 1$ (restricted to the inital states $\by_0$ where $s = 0$) iff $(\pi, f)$ is ISS (with $\gamma$ also $\kappa$-dependent) for some $\kappa(t) \leq (\decaybar_t)^{-1/\alpha}$.

All that remains is to show that (2) in \Cref{thm:single_equiv} is equivalent to (2) in \Cref{thm:main_equiv} for the lifted system $(\hat{\pi},\hat{f})$. Applying Theorem 1 then yields the desired result.

Assume that $V_{\sched}^{\pi,r_t}$ is $(Cc_2,\alpha)$-H\"older-continuous for any time-varying $(r_t)_{t \geq 0}$. For any $\by_0,\by_0'$ where $s = 0$, note that, since $\Rc$ is $(C,\alpha,c)$-sensitive and symmetric,
\begin{align*}
    \sum_{t=0}^{\infty} C\|\by_t - \by_t'\|^{\alpha}
    &=\sum_{t=0}^{\infty} \decaybar_{t} C\|\bx_t - \bx_t'\|^{\alpha} \\
    &\leq \frac{1}{c} \sum_{t=0}^{\infty} \decaybar_{t} \sup_{r_t \in \Rc} r_t(\bx_t,\bu_t) - r_t(\bx_t,\bu_t') \\
    &= \frac{1}{c} \sup_{\substack{(r_t)}} \left[V_{\sched}^{\pi,(r_t)}(\bx_0) - V_{\sched}^{\pi,(r_t)}(\bx_0')\right] \\
    &\leq \frac{Cc_2}{c} \|\by_0 - \by_0'\|^{\alpha}.
\end{align*}
Therefore, for any such $\by_0,\by_0'$, there exists some constant upper bound $\|\by_t - \by_t'\| \leq c' \hat{\kappa}(t)\|\by_0 - \by_0'\|$ where $c' \geq 1$ and $\hat{\kappa}$ is monotonically decreasing and satisfies $\|\hat{\kappa}^{\alpha}(t)\|_{1} \leq \infty$. Since $f,\pi$ are continuous and $\Xspace$ is compact, by taking the supremum over all $\by_0,\by_0'$ we can consider a $\hat{\kappa}$ which holds for all $\by_0,\by_0'$.

Therefore, consider any $V_{\sched'}^{\hat{\pi},\hat{r}}$ for any $\hat{r}$ and proper schedule $\sched'$.
\begin{align*}
    &\quad |V_{\sched'}^{\hat{\pi},\hat{r}}(\by_0) - V_{\sched'}^{\hat{\pi},\hat{r}}(\by_0')| \\
    &\leq C(1+L) \sum_{t=0}^{\infty} \decaybar_t' \|\by_t - \by_t'\|^{\alpha} \\
    &\leq C(1+L)c' \|\by_0 - \by_0'\|^{\alpha} \left(\sum_{t=0}^{\infty} \decaybar_t'\hat{\kappa}^{\alpha}(t)\right) \\
    &= C(1+L)c' \Exp_{t \sim P_{\kappa}}[\hat{\kappa}^{\alpha}(t)].
\end{align*}
The reverse is also the case. Assume that $V_{\sched'}^{\hat{\pi},\hat{r}}(\by)$ is $(Cc_{\sched'}\|\schedbar'\|_1,\alpha)$-H\"older-continuous for all $r$ and proper schedules $\|\schedbar'\|$. Then for any time-varying $r_t$
\begin{align*}
    &\quad \,\, \|V_{\sched}^{\pi,r_t}(\bx_0) - V_{\sched}^{\pi,r}(\bx_0')\| \\
    &\leq C (1+L) \sum_{t=0}^{\infty} \decaybar_t \|\bx_t - \bx_t'\|^{\alpha} \\
    &= C(1+L) \sum_{t=0}^{\infty} \|\by_t - \by_t'\|^{\alpha} \\
\intertext{
Let $\sched^{(t)}$ be as in the proof of $(3) \Rightarrow (1)$ from \Cref{thm:main_equiv} for some $\tau \in (0,1)$.
}
    &\leq C (1+L) \sum_{t=0}^{\infty} 2\tau^t \sup_{r \in \Rc} \left(V_{\sched^{(t)}}^{\hat{\pi},\hat{r}}(\by_0) - V_{\sched^{(t)}}^{\hat{\pi},\hat{r}}(\by_0')\right)\\
\intertext{Using the regularity of $V_{\sched^{(t)}}$, that $\|\sched^{(t)}\|_{1} \leq \tau^{-t}(1-\tau)$, and that in the limit of $\tau \to 0$, $\Exp_{t\sim \sched^{(t)}}[\hat{\kappa}^{\alpha}(t)] \to \kappa^{\alpha}(t)$}
    &\lesssim C\sum_{t=0}^{\infty}\hat{\kappa}^{\alpha}(t)\|\by_0 - \by_0'\|^{\alpha} \\
    &\leq C\|\hat{\kappa}^{\alpha}(t)\|_1 \cdot \|\bx_0 - \bx_0'\|^{\alpha}.
\end{align*}
Unlike for $V$, for $Q$ we require H\"older-continuity around any $\by, \bu$, including where $s \neq 0$. Here we leverage that $\sched$ is non-increasing. consider any sequences $(\by_t),(\by_t')$ generated by an input-perturbation $\hat{\delta\bu} = \decaybar_{s}^{-1/\alpha}\delta\bu$. Let $\by_0 = \by_0' = g(\bx_0,s)$ for some $s,\bx_0$. Then,
\begin{align*}
    &\quad\,\sum_{t=0}^{\infty} C\|\by_t - \by_t'\|^{\alpha}
    \\&=\sum_{t=0}^{\infty} \decaybar_{t+s} C\|\bx_t - \bx_t'\|^{\alpha} \\
    &\leq \decaybar_{s} \sum_{t=0}^{\infty} \decaybar_{t} C\|\bx_t - \bx_t'\|^{\alpha} \\
    &\leq \frac{\decaybar_s}{c} \sum_{t=0}^{\infty} \decaybar_{t} \sup_{r_t \in \Rc} r_t(\bx_t,\bu_t) - r_t(\bx_t,\bu_t') \\
    &= \frac{\decaybar_s}{c} \sup_{\substack{(r_t)}} \left[Q_{\sched}^{\pi,(r_t)}(\bx_0,\decaybar_s^{-1/\alpha} \delta\bu) - Q_{\sched}^{\pi,(r_t)}(\bx_0,0)\right] \\
    &\leq \frac{Cc_2}{c} \|\bx_0 - \bx_0'\|^{\alpha} \\
    &= \frac{Cc_2}{c} \|\by_0 - \by_0'\|^{\alpha}.
\end{align*}

The rest of the equivalence proof for $\Qpi$ thus proceeds analogously to $\Vpi$. For the reverse direction, where we wish to show local-H\"older-continuity of $Q^{\hat{\pi},\hat{r}}_{\sched'}$ implies local-H\"older-continuity of $\Qpi$, we need only consider the initial states $\by_0 := g(\bx_0,0)$, so no modifications need to be made to the proof for $\Vpi$.
\end{proof}

\end{document}